\newtheorem{thm}{Theorem}[]
\newtheorem{lemma}{Lemma}[]
\newtheorem{prop}{Property}[]
\newtheorem{ass}{Assumption}[]
\newtheorem{reqm}{Requirement}
\definecolor{dark-red}{rgb}{0.4,0.15,0.15}
\definecolor{dark-blue}{rgb}{0,0,0.7}
\DeclareMathOperator*{\argmax}{arg\,max}
\let\oldnl\nl
\newcommand{\nonl}{\renewcommand{\nl}{\let\nl\oldnl}}
\newcommand{\myparagraph}[1]{\vspace{-0.2cm}\par{\hspace{0.1cm}}\\ \noindent \textbf{#1} \hspace{0.2cm}}
\title{Towards Safe Policy Improvement  for\\  Non-Stationary MDPs}
\author{%
Yash Chandak\\
  University of Massachusetts\\
  \texttt{ychandak@cs.umass.edu} \\
   \And
   Scott M. Jordan \\
   University of Massachusetts\\
  \texttt{sjordan@cs.umass.edu} \\
   \And
   Georgios Theocharous \\
   Adobe Research\\
  \texttt{theochar@adobe.com} \\
   \And
   Martha White \\
   University of Alberta \& Amii\\
  \texttt{whitem@alberta.ca} \\
   \And
   Philip S. Thomas \\
   University of Massachusetts\\
  \texttt{pthomas@cs.umass.edu} \\
}
\begin{document}

\maketitle

\begin{abstract}
Many real-world sequential decision-making problems involve critical systems with financial risks and human-life risks.
While several works in the past have proposed methods that are \textit{safe} for deployment, they assume that the underlying problem is \textit{stationary}.
However, many real-world problems of interest exhibit non-stationarity, and when stakes are high, the cost associated with a false stationarity assumption may be unacceptable.
%
%
We take the first steps towards ensuring safety, with high confidence, for smoothly-varying non-stationary decision problems.
Our proposed method extends a type of safe algorithm, called a \emph{Seldonian algorithm}, through a synthesis of model-free reinforcement learning with time-series analysis.
Safety is ensured using sequential hypothesis testing of a policy's \textit{forecasted} performance, and confidence intervals are obtained using \textit{wild bootstrap}.
\end{abstract}
\vspace{-5pt}
\section{Introduction}
Reinforcement learning (RL) methods have been applied to real-world sequential decision-making problems such as diabetes management \cite{bastani2014model}, sepsis treatment \cite{saria2018individualized}, and budget constrained bidding \citep{wu2018budget}.
For such real-world applications, safety guarantees are critical to mitigate serious risks in terms of both human-life and monetary assets. More concretely, here, by \textit{safety} we mean that any update to a system should not reduce the performance of an existing system (e.g., a doctor's initially prescribed treatment).
A further complication is that these applications are non-stationary, violating the foundational assumption \cite{sutton2018reinforcement} of \textit{stationarity} in most RL algorithms.
This raises the main question we aim to address: \textit{How can we build sequential decision-making systems that provide safety guarantees for problems with non-stationarities?}

Conventionally, RL algorithms designed to ensure safety \citep{pirotta2013safe,garcia2015comprehensive, thomas2015safe,  zhang2016query, laroche2017safe, chow2018lyapunov} model the environment as a Markov decision process (MDP), and rely upon the \textit{stationarity assumption} made by MDPs \cite{sutton2018reinforcement}.
That is, MDPs assume that a decision made by an \textit{agent} always results in the same (distribution of) consequence(s) when the environment is in a given state.
Consequently, safety is only ensured by prior methods when this assumption holds, which is rare in real-world problems.

While some works for lifelong-reinforcement learning \cite{brunskill2014pac, abel2018policy, chandak2020lifelong, chandak2020future} or meta-reinforcement learning \cite{al2017continuous,xie2020deep} do aim to address the problem of non-stationarity, they do not provide any safety guarantees.
Perhaps the work most closely related to ours is by \citet{ammar2015safe}, which aims to find a policy that satisfies a safety constraint in the lifelong-learning setting.
They use a follow-the-regularized-leader (FTRL) \cite{shalev2012online} approach to first perform an unconstrained maximization over the \textit{average} performance over all the trajectories collected in the past, and then project the resulting solution onto a safe set.
However, as shown by \citet{chandak2020future}, FTRL based methods can suffer from a significant performance lag in non-stationary environments.
Further, the parameter projection requires \textit{a priori} knowledge of the set of safe policy parameters, which might be infeasible to obtain for many problems, especially when the constraint is to improve performance over an existing policy or when the safe set is non-convex (e.g., when using policies parameterized using neural networks).
Additionally, the method proposed by \citet{chandak2020future} for policy improvement does not provide safety guarantees, and thus it would be irresponsible to apply it to safety-critical problems.
%
%
%

%
%
\textbf{Contributions:} In this work, we formalize the \textit{safe policy improvement} problem for a more realistic non-stationary setting and provide an algorithm for addressing it. 
    Additionally, a user-controllable knob is provided to set the desired \textit{confidence level}: the maximum admissible probability that a worse policy will be deployed.
The proposed method relies only on estimates of future performance, with associated confidence intervals. It does not require building a model of a non-stationary MDP (NS-MDP), and so it is applicable to a broader class of problems, as modeling an NS-MDP can often be prohibitively difficult.
In Figure \ref{fig:idea}, we provide an illustration of the proposed approach for ensuring safe policy improvement for NS-MDPs.

\textbf{Limitations: } 
The method that we propose is limited to settings where both (a) non-stationarity is governed by an exogenous process (that is, past actions do not impact the underlying non-stationarity), and (b) the performance of every policy changes smoothly over time and has no discontinuities (abrupt breaks or jumps).
Further, the use importance sampling makes our method prone to high variance.

\begin{figure}[t]
    \centering
    \includegraphics[width=0.9\textwidth]{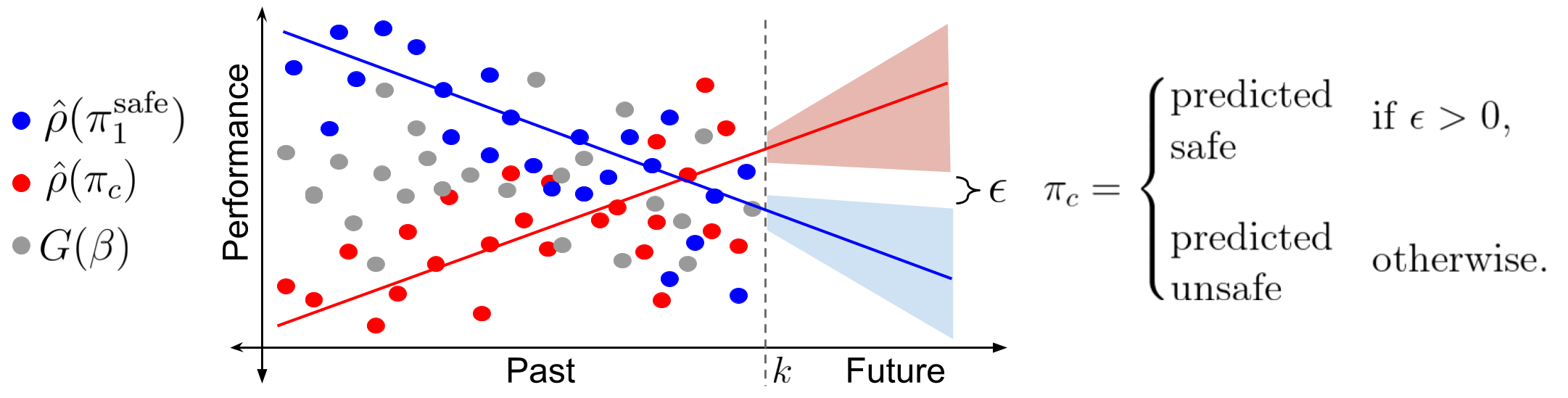}
    \caption{An illustration of the proposed idea where \textit{safety} is defined to ensure that the future performance of a proposed policy $\pi_c$ is never worse than that of an existing, known, safe policy $\pi^\text{safe}$.
    The gray dots correspond to the returns, $ G(\beta)$, observed for a policy $\beta$. 
    The red and the blue dots correspond to the counterfactual estimates, $\hat \rho(\pi_c)$ and $\hat \rho(\pi^\text{safe})$, for performance of $\pi_c$ and $\pi^\text{safe}$, respectively.
    The shaded regions correspond to the uncertainty in future performance obtained by analysing the trend of the counterfactual estimates for past performances. 
    %
    }
    \label{fig:idea}
\end{figure}

\section{Notation}
 
We represent an NS-MDP as a \textit{stochastic sequence}, $\{ M_i\}_{i=1}^\infty$, of stationary MDPs $M_i \in \mathcal M$, where $\mathcal M$ is the set of all stationary MDPs.
Each $M_i$ is a tuple $(\mathcal S, \mathcal A, \mathcal P_i, \mathcal R_i, \gamma, d^0)$, where $\mathcal S$ is the set of possible states, $\mathcal A$ is the set of actions, $\gamma \in [0,1)$ is the \textit{discounting factor} \cite{sutton2018reinforcement}, $d^0$ is the start-state distribution,  $\mathcal R_i: \mathcal S \times \mathcal A \rightarrow \Delta(\mathbb{R})$ is the reward distribution, and $\mathcal P_i: \mathcal S \times \mathcal A \rightarrow \Delta(\mathcal S)$ is the transition function, where $\Delta$ denotes a conditional distribution.
For all $M_i \in \mathcal M$,  we assume that $\mathcal S, \mathcal A, \gamma, \text{and} \,\, d^0$ remain fixed.
We represent a policy as  $\pi: \mathcal S \rightarrow \Delta(\mathcal A)$.

Let $s_\infty$ be a \textit{terminal absorbing state} \cite{sutton2018reinforcement} and an \textit{episode} be a sequence of interactions with a given MDP, which enters $s_\infty$ within $T$ time steps.
In general, we will use subscripts to denote the episode number and superscripts to denote the time step within an episode.
That is, $S^t_i, A^t_i, $ and $R^t_i$ are the random variables corresponding to the state, action, and reward at time step $t$ in episode $i$. 
Let a trajectory for episode $i$ generated using a policy (also known as a \textit{behavior policy}) $\beta_i$ be $H_i \coloneqq \{S^j_i, A^j_i, \beta_i(A^j_i|S^j_i), R^j_i\}_{j=0}^{\infty}$, where
 $\forall (i, t), \,\, R_i^t \in  [-R_\text{max}, R_\text{max}]$.
Let a \textit{return} of $\pi$ for any $m \in \mathcal M$ be $G(\pi, m) \coloneqq \sum_{t=0}^\infty \gamma^t R^t$ and the \textit{expected return} $\rho(\pi, m) \coloneqq \mathbb{E}[G(\pi, m)]$.
%
With a slight overload of notation, let the \textit{ performance} of $\pi$ for episode $i$ be $\rho(\pi, i) \coloneqq \mathbb{E}[\rho(\pi, M_i)]$.
We will use $k$ to denote the most recently finished episode, such that episode numbers $[1, k]$ are in the past and episode numbers $(k, \infty]$ are in the future.

\section{The Safe Policy Improvement Problem for Non-stationary MDPs}

In this section, we formally present the problem statement, discuss the difficulty of this problem, and introduce a smoothness assumption that we leverage to make the problem tractable.

\textbf{Problem Statement: } 
Let $\mathcal D \coloneqq \{(i, H_i) : i \in [1, k] \}$ be a random variable denoting a set of trajectories observed in the past and let $\texttt{alg}$ be an algorithm that takes $\mathcal D$ as input and returns a policy $\pi$.
Let $\pi^\text{safe}$ be a known safe policy, and let $(1 - \alpha) \in [0,1]$ be a constant selected by a user of \texttt{alg}, which we call the \textit{safety level}.
We aim to create an algorithm \texttt{alg} that ensures with high probability that $\texttt{alg}(\mathcal D)$, the policy proposed by \texttt{alg}, does not perform worse than the existing safe policy $\pi^\text{safe}$ during the \textit{future} episode $k + \delta$. 
That is, we aim to ensure the following \textit{safety guarantee},
\begin{align}
    \Pr \Big(\rho(\texttt{alg}(\mathcal D), {k+\delta}) \geq \rho(\pi^\text{safe}, k + \delta) \Big) \geq 1 - \alpha. \label{eqn:constraint}
\end{align}%

\textbf{Hardness of the Problem:}
While it is desirable to ensure the safety guarantee in \eqref{eqn:constraint}, obtaining a new policy from $\texttt{alg}(\mathcal D)$ that meets the requirement in \eqref{eqn:constraint} might be impossible unless some more regularity assumptions are imposed on the problem. 
To see why, notice that if the environment can change arbitrarily, then there is not much hope of estimating $\rho(\pi, {k+\delta})$ accurately since $\rho(\pi, k + \delta)$ for any $\pi$ could be any value between the extremes of all possible outcomes, regardless of the data collected during episodes $1$ through $k$.

To avoid arbitrary changes, previous works typically require the transition function $\mathcal P_k$ and the reward function $\mathcal R_k$ to be Lipschitz smooth over time \citep{lecarpentier2019non,jagerman2019people,lecarpentier2020lipschitz,Cheung2020drifting}. And in fact, we can provide a bound on the change in performance given such Lipschitz conditions, as we show below in Theorem \ref{thm:lipbound}.
Unfortunately, this bound is quite large: unless the Lipschitz constants are so small that they effectively make the problem stationary, the performance of a policy $\pi$ across consecutive episodes can still fluctuate wildly. Notice that due to the inverse dependency on $(1 - \gamma)^2$, if $\gamma$ is close to one,
then the Lipschitz constant $L$ can be enormous even when $\epsilon_P$ and $\epsilon_R$ are small. In Appendix \ref{apx:proof}, we provide an example of an NS-MDP for which \thref{thm:lipbound} holds with exact equality, illustrating that the bound is tight.
\begin{thm}[Lipschitz smooth performance]
\thlabel{thm:lipbound}
If $\exists \epsilon_P \in \mathbb{R}$ and $\exists \epsilon_R \in \mathbb{R}$ such that for any $M_k$ and $M_{k+1}$,
$
\forall s \in \mathcal S, \forall a \in \mathcal A,\,\,\, \lVert \mathcal P_{k}(\cdot| s, a) -\mathcal  P_{k+1}(\cdot| s, a)\rVert_1 \leq \epsilon_P \,\, \text{and} \,\, | \mathbb{E}[\mathcal R_{k}(s, a)] - \mathbb{E}[\mathcal  R_{k+1}(s, a)]| \leq \epsilon_R
$,
then the performance of any policy $\pi$ is Lipschitz smooth over time, with Lipschitz constant $L \coloneqq  \left (\frac{\gamma R_{\text{max}}}{(1 - \gamma)^2}\epsilon_P + \frac{1}{1 - \gamma}\epsilon_R \right)$.
That is, $\forall k \in \mathbb{N}_{> 0}, \forall \delta \in \mathbb{N}_{> 0}, \,\, \,\, |\rho(\pi, k) - \rho(\pi, {k+\delta})| \leq  L \delta$.
%
\end{thm}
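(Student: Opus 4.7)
The plan is to reduce the $\delta$-step bound to the one-step case and prove the one-step case via a contraction argument on value functions. By the triangle inequality,
\[
|\rho(\pi, k) - \rho(\pi, k+\delta)| \leq \sum_{j=0}^{\delta-1} |\rho(\pi, k+j) - \rho(\pi, k+j+1)|,
\]
so it suffices to show $|\rho(\pi, k) - \rho(\pi, k+1)| \leq L$ for arbitrary consecutive MDPs $M_k, M_{k+1}$ satisfying the hypothesis; the $L\delta$ bound then follows immediately, and the expectation over the stochastic sequence $\{M_i\}$ preserves it.

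For the one-step bound, I would write the performance as $\rho(\pi, M) = \mathbb{E}_{S^0 \sim d^0}[V^\pi_M(S^0)]$, where $V^\pi_M$ is the value function of $\pi$ in MDP $M$, and control $\|V^\pi_{M_k} - V^\pi_{M_{k+1}}\|_\infty$. The standard trick is to let $T_k^\pi$ denote the Bellman evaluation operator for $\pi$ in $M_k$ and insert $T_{k+1}^\pi V^\pi_{M_k}$:
\[
V^\pi_{M_k} - V^\pi_{M_{k+1}} \;=\; T_k^\pi V^\pi_{M_k} - T_{k+1}^\pi V^\pi_{M_k} \;+\; T_{k+1}^\pi V^\pi_{M_k} - T_{k+1}^\pi V^\pi_{M_{k+1}}.
\]
Since $T_{k+1}^\pi$ is a $\gamma$-contraction in sup-norm, taking $\|\cdot\|_\infty$ and rearranging yields
\[
\|V^\pi_{M_k} - V^\pi_{M_{k+1}}\|_\infty \;\leq\; \frac{1}{1-\gamma}\,\|T_k^\pi V^\pi_{M_k} - T_{k+1}^\pi V^\pi_{M_k}\|_\infty.
\]

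To bound the right-hand side, I would expand both Bellman operators and use the hypotheses pointwise in $(s,a)$. The reward-difference term contributes at most $\epsilon_R$ (after averaging with $\pi(\cdot|s)$). The transition-difference term is
\[
\gamma \Bigl|\sum_{s'} \bigl(\mathcal P_k(s'|s,a) - \mathcal P_{k+1}(s'|s,a)\bigr)\,V^\pi_{M_k}(s')\Bigr|,
\]
which is bounded via H\"older by $\gamma \, \|\mathcal P_k(\cdot|s,a) - \mathcal P_{k+1}(\cdot|s,a)\|_1 \cdot \|V^\pi_{M_k}\|_\infty \leq \gamma \epsilon_P \cdot R_{\max}/(1-\gamma)$, using the standard uniform bound $\|V^\pi_M\|_\infty \leq R_{\max}/(1-\gamma)$. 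Adding the two contributions and multiplying by $1/(1-\gamma)$ gives exactly $L = \frac{\gamma R_{\max}}{(1-\gamma)^2}\epsilon_P + \frac{1}{1-\gamma}\epsilon_R$. Finally, $|\rho(\pi, k) - \rho(\pi, k+1)| \leq \|V^\pi_{M_k} - V^\pi_{M_{k+1}}\|_\infty \leq L$ since $d^0$ is shared.

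The only mildly subtle step is the contraction trick used to transfer a one-step operator perturbation into a bound on the (infinite-horizon) value functions; everything else is routine. The $L\delta$ bound then follows by the triangle inequality above, and the probabilistic wrapper is immediate since the pathwise bound holds for every realization of the sequence $\{M_i\}$.
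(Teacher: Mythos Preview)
Your proof is correct and arrives at exactly the claimed constant, but it takes a genuinely different route from the paper. The paper works in the \emph{occupancy-measure} (primal) picture: it writes $\rho(\pi,M_i) = (1-\gamma)^{-1}\sum_{s} d^\pi(s,M_i)\sum_a \pi(a|s) R_i(s,a)$, splits the consecutive difference into a reward-shift term and a state-distribution-shift term, and then invokes a lemma of \citet{achiam2017constrained} to bound $\sum_s |d^\pi(s,M_i) - d^\pi(s,M_{i+1})|$ by $\gamma(1-\gamma)^{-1}\epsilon_P$. You instead work in the \emph{value-function} (dual) picture: you bound $\|V^\pi_{M_k} - V^\pi_{M_{k+1}}\|_\infty$ by inserting $T_{k+1}^\pi V^\pi_{M_k}$, using the $\gamma$-contraction of the Bellman evaluation operator to absorb one term, and bounding the operator-perturbation term $\|(T_k^\pi - T_{k+1}^\pi)V^\pi_{M_k}\|_\infty$ directly via H\"older and $\|V^\pi_{M_k}\|_\infty \le R_{\max}/(1-\gamma)$. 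Both decompositions yield the identical Lipschitz constant. Your argument has the mild advantage of being entirely self-contained (no external occupancy lemma needed) and making the role of the $\gamma$-contraction explicit; the paper's occupancy formulation, on the other hand, connects more naturally to the simulation-lemma literature it cites and to the tightness example it constructs afterward. The triangle-inequality reduction to consecutive steps and the final passage through the expectation over $\{M_i\}$ are handled identically in both.
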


%
All proofs are deferred to the appendix.
%
%

%
%
%
%
%

\textbf{An alternate assumption:}
In many real-world sequential decision-making problems, while there is non-stationarity, the performance of a policy $\pi$  does not fluctuate wildly between consecutive episodes. Examples where performance changes are likely more regular include the effect of a medical treatment on a patient; the usefulness of online recommendations based on the interests of a user; or the quality of a controller as a robot's motor friction or battery capacity degrades. 
Therefore, 
instead of considering smoothness constraints on the transition function $\mathcal P_k$ and the reward function $\mathcal R_k$ like above, we consider more direct smoothness constraints on the performance $\rho(\pi, i)$ of a policy $\pi$. 
Similar assumptions have been considered for analyzing trends for digital marketing \cite{thomas2017predictive} and remain popular among policymakers for designing policies based on forecasting \cite{wieland2013forecasting}.

%

If $\rho(\pi,i)$ changes smoothly with episode $i$, then the performance trend of a given policy $\pi$ can be seen as a \textit{univariate time-series}, i.e., a sequence of \textit{scalar} values corresponding to performances $\{\rho(\pi, i)\}_{i=1}^k$ of $\pi$ during episodes $1$ to $k$.
 Leveraging this observation, 
 %
 we propose modeling the performance trend
using a linear regression model that takes an episode number as the input and provides a performance prediction as the output.
To ensure that a wide variety of trends can be modeled, we use a $d$-dimensional non-linear \textit{basis function} $\phi : \mathbb{N}_{>0} \rightarrow \mathbb{R}^{1 \times d}$.
For example, $\phi$ can be the Fourier basis, which has been known to be useful for modeling a wide variety of trends and is fundamental for time-series analysis  \cite{bloomfield2004fourier}.
We state this formally in the following assumption,
\begin{ass}[Smooth performance] 
\thlabel{ass:smooth}
For every policy $\pi$, there exists a sequence of mean-zero and independent noises $\{\xi_i\}_{i=1}^{k+\delta}$, and $\exists w \in \mathbb{R}^{d \times 1} $, such that, $\forall i \in
[1, k+\delta], \,\,\,  \rho(\pi, M_i) = \phi(i)w + \xi_i$.
\end{ass}
Recall that the stochasticity in $\rho(\pi, M_i)$ is a manifestation of stochasticity in $M_i$, and thus this assumption requires that the performance of $\pi$ during episode $i$ is $\rho(\pi, i) = \mathbb{E}[\rho(\pi, M_i)] = \phi(i)w$.

Assumption 1 is reasonable for several reasons. The first is that the noise assumptions are not restrictive. The distribution of $\xi_i$ does not need to be known and the $\xi_i$ can be non-identically distributed. Additionally, both $w$ and $\{\xi_i\}_{i=1}^{k+\delta}$ can be different for different policies. 
The independence assumption only states that at each time step, the variability in performance due to sampling $M_i$ is independent of the past (i.e., there is no auto-correlated noise).

The strongest requirement is that the performance trend be a linear function of the basis $\phi$; but because $\phi$ is a generic basis, this is satisfied for a large set of problems. Standard methods that make stationarity assumptions correspond to our method with $\phi(s) = [1]$ (fitting a horizontal line). Otherwise, $\phi$ is generic: we might expect that there exist sufficiently rich features (e.g., Fourier basis \citep{bloomfield2004fourier})) for which Assumption 1 is satisfied. In practice, we may not have access to such a basis, but like any time-series forecasting problem, goodness-of-fit tests \cite{chen2003empirical} can be used by practitioners to check whether \thref{ass:smooth} is reasonable before applying our method.

The basis requirement, however, can be a strong condition and could be violated. This assumption is \textit{not} applicable for settings where there are jumps or breaks in the performance trend. For example, performance change is sudden when a robot undergoes physical damage, its sensors are upgraded, or it is presented with a completely new task. The other potential violation is the fact that the basis is a function of time. Since the dimension $d$ of the basis $\phi$ is finite and fixed, but $k$ can increase indefinitely, this assumption implies that performance trends of the policies must exhibit a global structure, such as periodicity. This can be relaxed using auto-regressive methods that are better at adapting to the local structure of any time-series. We discuss this and other potential future research directions in Section \ref{sec:broaderImpactpst}. 

\section{SPIN: Safe Policy Improvement for Non-Stationary Settings }
\label{sec:spin}

\begin{figure}
    \centering
    \includegraphics[width=0.9\textwidth]{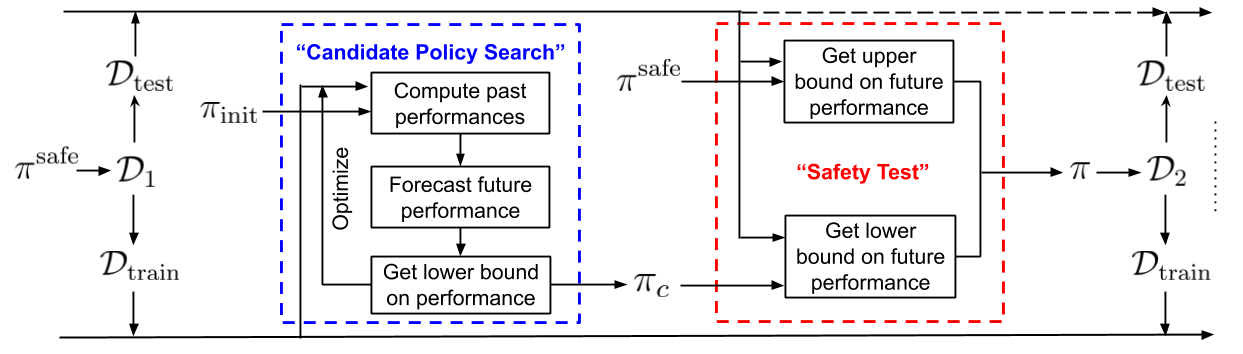}
    \caption{
    %
    The proposed algorithm first partitions the initial data $\mathcal D_1$ into two sets, namely $\mathcal D_\text{train}$ and $\mathcal D_\text{test}$.
    Subsequently, $\mathcal D_\text{train}$ is used to search for a possible \textit{candidate policy} $\pi_c$ that might improve the future performance, and $\mathcal D_\text{test}$ is used to perform a safety test on the proposed candidate policy $\pi_c$.
    The existing safe policy $\pi^\text{safe}$ is only updated if the proposed policy $\pi_c$ passes the safety test.
    \vspace{-10pt}
    }
    \label{fig:Seldonian}
\end{figure}

To ensure safe policy improvement, we adapt the generic template of the Seldonian framework \citep{thomas2019preventing} to the non-stationary setting.
The overall approach consists of continually (1) taking an existing safe policy; (2) finding a candidate policy that has (reasonably high) potential to be a strict improvement on the safe policy; (3) testing if this candidate policy is still safe and is an improvement with high confidence; (4) updating the policy to be the candidate policy only if it passes the test; and (5) gathering more data with the current safe policy to get data for the next candidate policy search.
This procedure consists of four key technical steps: \textit{performance estimation}, \textit{safety test}, \textit{candidate policy search}, and \textit{data-splitting}.
A schematic diagram of the overall procedure is provided in Figure \ref{fig:Seldonian}.

\textbf{Performance Estimation: }
To develop an algorithm that ensures the safety constraint in \eqref{eqn:constraint}, we first require an estimate $\hat \rho(\pi, k+\delta)$ of the future performance $ \rho(\pi, k + \delta)$ and the uncertainty of this estimate, namely a function $\mathscr C$ for obtaining a confidence interval (CI) on future performances. 
Under \thref{ass:smooth}, estimating $\rho(\pi, {k+\delta})$ (the performance  of a policy $\delta$ episodes into the future) can be seen as a \textit{time-series forecasting} problem given the \textit{performance trend} $\{\rho(\pi, i)\}_{i=1}^k$.
We build upon the work by \citet{chandak2020future} to estimate  $\rho(\pi, {k+\delta})$. However, to the best of our knowledge, no method yet exists to obtain $\mathscr C $. A primary contribution of this work is to provide a solution to this technical problem, developed in Section \ref{sec:CI}.

\textbf{Safety Test: } 
To satisfy the required safety constraint in \eqref{eqn:constraint}, an algorithm $\texttt{alg}$ needs to ensure with high-confidence that a given $\pi_c$, which is a \textit{candidate policy} for updating the existing safe policy $\pi^\text{safe}$, will have a higher future performance than that of $\pi^\text{safe}$.
Importantly, just as the future performance, $\rho(\pi_c, k+\delta)$, of $\pi_c$ is not known \emph{a priori} for a non-stationary MDP, the future performance of the baseline policy $\pi^\text{safe}$ is also not known \textit{a priori}.
%
Therefore, to ensure that the constraint in \eqref{eqn:constraint} is satisfied, we use $\mathscr C$ to obtain a \textit{one-sided} lower and upper bound for $\rho(\pi_c, {k+\delta})$ and $ \rho(\pi^\text{safe}, {k+\delta})$, respectively, each with confidence level $\alpha/2$.
The confidence level is set to  $\alpha/2$ so that the total failure rate (i.e., either $\rho(\pi_c, k+\delta)$ or $\rho(\pi^\text{safe}, k+\delta)$ is outside their respective bounds) is no more than $\alpha$.
Subsequently, $\texttt{alg}$ only updates $\pi^\text{safe}$ if the lower bound of $\rho(\pi_c, {k+\delta})$ is higher than the upper bound of $ \rho(\pi^\text{safe}, {k+\delta})$; otherwise, no update is made and $\pi^\text{safe}$ is chosen to be executed again.
%



%

\textbf{Candidate Policy Search: }
%
%
%
An ideal candidate policy $\pi_c$ would be one that has high future performance  $ \rho(\pi_c, {k+\delta})$, along with a large  confidence lower bound on its performance, so that it can pass the safety test.
However, in practice, there could often be conflicts between policies that might have higher estimated future performance but with lower confidence, and policies with lower  estimates of future performance but with higher confidence.
As the primary objective of our method is to ensure safety, we draw inspiration from prior methods for conservative/safe learning in stationary domains \cite{garcia2015comprehensive, thomas2015high, kazerouni2017conservative, chow2018lyapunov} and propose searching for a policy that has the \textit{highest lower} confidence bound.
That is, let the one-sided CI for the future  performance $\rho(\pi, k+\delta)$ obtained using $\mathscr C$ be $[\hat \rho^\text{lb}(\pi), \infty)$, then $ \pi_c \in \argmax_\pi \hat \rho^\text{lb}(\pi)$.

\textbf{Data-Splitting: } 
%
%
Conventionally, in the time-series literature, there is only a single trend that needs to be analyzed.
%
In our problem setup, however, the time series forecasting function is used to analyze trends of multiple policies during the candidate policy search.
%
%
If all of the available data $\mathcal D$ is used to estimate the lower bound $\hat \rho^\text{lb}(\pi)$ for $\rho(\pi, k+\delta)$ and if $\pi$ is chosen by maximizing $\hat \rho^\text{lb}(\pi)$, then due to the \textit{multiple comparisons problem} \citep{benjamini1995controlling}   we are likely to find a $\pi$ that over-fits to the data and achieves a higher value of $\hat \rho^\text{lb}(\pi)$.
A safety test based on such a $\hat \rho^\text{lb}(\pi)$ would thus be unreliable.
%
%
To address this problem, we partition $\mathcal D$ into two mutually exclusive sets, namely $\mathcal D_\text{train}$ and $\mathcal D_\text{test}$, such that only $\mathcal D_\text{train}$ is used to search for a candidate  policy $\pi_c$ and only $\mathcal D_\text{test}$ is used during the safety test.
%
%
%

\section{Estimating Confidence Intervals for Future Performance}
\label{sec:CI}
To complete the SPIN framework discussed in Section \ref{sec:spin}, we need to obtain an estimate $\hat \rho(\pi, k+\delta)$ of $\rho(\pi, k+\delta)$ and its confidence interval using the function $\mathscr C$.
This requires answering two questions:  
    (1) Given that in the past, policies $\{\beta_i\}_{i=1}^k$ were used to generate the observed returns, how do we estimate $\hat \rho(\pi, k+\delta)$ for a \textit{different} policy $\pi$? 
    (2)  Given that the trajectories are obtained only from a \textit{single} sample of the sequence  $\{M_i\}_{i=1}^{k}$, how do we obtain a confidence interval around $\hat \rho(\pi, k+\delta)$?  
We answer these two questions in this section. 

\textbf{Point Estimate of Future Performance:} 
To answer the first question, we build upon the following observation used by \citet{chandak2020future}:
While in the past, returns were observed by executing policies $\{\beta_i\}_{i=1}^k$, \textit{what if} policy $\pi$ was executed instead?  
%

Formally, we use per-decision importance sampling \citep{precup2000eligibility} for $H_i$, to obtain a \textit{counterfactual} estimate
%
$
     \hat \rho(\pi, i) \coloneqq  \sum_{t=0}^{\infty} \left ( \prod_{l=0}^t 
    \frac{\pi(A_i^l|S_i^l)}{\beta_i(A_i^l|S_i^l)} \right) \gamma^t R_i^t 
    $,
     of  $\pi$'s performance in the past episodes $i \in [1, k]$.
%
%
This estimate $\hat \rho(\pi, i)$ is an unbiased estimator of $\rho(\pi, i)$, i.e., $\mathbb{E}[\hat \rho(\pi,i)] = \rho(\pi, i)$, under the the following assumption \cite{thomas2015safe}, which can typically be satisfied using an entropy-regularized policy $\beta_i$.
\begin{ass}[Full Support]
\thlabel{ass:fullsupport}
    $\forall a \in \mathcal A$ and $\forall s \in \mathcal S$ there exists a $c > 0$ such that $\forall i, \beta_i(a|s) > c$.
\end{ass}
Having obtained counterfactual estimates $\{\hat \rho(\pi, i)\}_{i=1}^k$, we can then estimate $\rho(\pi, k+\delta)$ by analysing the performance trend of $\{\hat \rho(\pi, i)\}_{i=1}^k$ and forecasting the future performance $\hat \rho (\pi, {k+\delta})$.
That is, let $X \coloneqq [1, 2, ..., k]^\top  \in \mathbb{R}^{k \times 1}$, let $\Phi \in \mathbb{R}^{k \times d}$ be the corresponding basis matrix for $X$ such that $i^\text{th}$ row of $\Phi$, $\forall i \in [1,k]$, is  $\Phi_i \coloneqq \phi(X_i)$, and let
$   Y \coloneqq [\hat \rho(\pi, 1), \hat \rho(\pi, 2), ..., \hat \rho(\pi, k)]^\top \in \mathbb{R}^{k \times 1} $.
    Then under \thref{ass:smooth,ass:fullsupport}, an estimate $\hat \rho(\pi, k+\delta)$ of the future performance can be computed using least-squares (LS) regression, i.e.,
%
   $\hat \rho (\pi, {k+\delta}) = \phi(k+\delta)\hat w = \phi(k+\delta)(\Phi^\top  \Phi)^{-1}\Phi^\top Y.$
%
%

%
\myparagraph{Confidence Intervals for Future Performance: }
We now aim to quantify the uncertainty of $\hat \rho(\pi, k+\delta)$ using a confidence interval (CI), such that the true future performance $\rho(\pi, k+\delta)$ will be contained within the CI with the desired confidence level.
To obtain a CI for $\rho(\pi, k+\delta)$, we make use of $\texttt{t}$-statistics \cite{wasserman2013all} and use the following notation.
Let the sample standard deviation for $\hat \rho(\pi, k+\delta)$ be $\hat s$, where $\hat s^2 \coloneqq \phi(k+\delta)(\Phi^\top\Phi)^{-1}\Phi^\top  \hat \Omega \Phi (\Phi^\top\Phi)^{-1} \phi(k+\delta)^\top$, where $\hat \Omega$ is a diagonal matrix containing the square  of the regression errors $\hat \xi$ (see Appendix \ref{apx:sec:prelim} for more details), and
let the $\texttt{t}$-statistic be $\texttt{t} \coloneqq (\hat \rho(\pi, k+\delta) - \rho(\pi, k+\delta))/\hat s$. 

If the distribution of $\texttt{t}$ was known, then a $(1-\alpha)100\%$ CI could be obtained as $[\hat \rho (\pi, k+\delta) - \hat s \texttt{t}_{1 - \alpha/2}, \,\, \hat \rho (\pi, k+\delta) - \hat s \texttt{t}_{\alpha/2}]$, where for any $\alpha \in [0,1], \texttt{t}_\alpha$ represents $\alpha$-\textit{percentile} of the $\texttt{t}$ distribution. 
Unfortunately, the distribution of $\texttt{t}$ is not known.
One alternative could be to assume that $\texttt{t}$ follows the \textit{student}-\texttt{t} distribution \cite{student1908probable}.
However, that would only be valid if  \textit{all} the error terms in regression are \textit{homoscedastic} and \textit{normally} distributed.
Such an assumption could be severely violated in our setting due to the heteroscedastic nature of the estimates of the past performances resulting from the use of potentially different behavior policies $\{\beta_i\}_{i=1}^k$ and due to the unknown form of stochasticity in $\{M_i\}_{i=1}^k$.
Further, due to the use of importance sampling, the performance estimates $\{\hat \rho(\pi, i)\}_{i=1}^k$ can often be skewed and have a heavy-tailed distribution with high-variance \cite{thomas2015higheval}.
We provide more discussion on these issues in Appendix \ref{apx:sec:ttest}.

To resolve the above challenges, we make use of \textit{wild bootstrap}, a semi-parametric bootstrap procedure that is popular in time series analysis and econometrics \citep{wu1986jackknife, liu1988bootstrap, mammen1993bootstrap, davidson1999wild, davidson2008wild}. The idea is to generate multiple pseudo-samples of performance for each $M_i$, using the single sampled performance estimate. These multiple pseudo-samples can then be used to obtain an empirical distribution and so characterize the range of possible performances. This empirical distribution is for a pseudo $\texttt{t}$-statistic, $\texttt{t}^*$, where the $\alpha$-percentile for $\texttt{t}^*$ can be used to estimate the $\alpha$-percentile of the distribution of $\texttt{t}$. Below, we discuss how to get these multiple pseudo-samples.

Recall that trajectories $\{H_i\}_{i=1}^k$ are obtained only from a \textit{single} sample of the sequence $\{M_i\}_{i=1}^{k}$.
Due to this, only a single point estimate $\hat \rho(\pi, k+\delta)$, devoid of any estimate of uncertainty, of the future performance $\rho(\pi, k+\delta)$ can be obtained. Therefore, we aim to create \textit{pseudo-samples} of $\{\hat \rho(\pi, i)\}_{i=1}^k$ that \textit{resemble} the estimates of past performances that would have been obtained using trajectories from an alternate sample of the sequence $\{M_i\}_{i=1}^{k}$. The wild bootstrap procedure provides just such an approach, with the following steps.
\begin{enumerate}[leftmargin=*,itemsep=-1ex,topsep=0pt]
    \item Let  $Y^+ \coloneqq [\rho(\pi, 1), ..., \ \rho(\pi, k)]^\top \in \mathbb{R}^{k \times 1} $ correspond to the true performances of $\pi$. 
    Create $\hat Y = \Phi(\Phi^\top  \Phi)^{-1}\Phi^\top Y$, an LS estimate of $Y^+$, using the counterfactual performance estimates $Y$
    and obtain the regression errors  $\hat \xi \coloneqq \hat Y - Y$.
    \item Create pseudo-noises $\xi^* \coloneqq \hat \xi \odot \sigma^*$, where $\odot$ represents Hadamard product and $\sigma^* \in \mathbb{R}^{k \times 1}$ is Rademacher random variable (i.e., $\forall i \in [1,k], \,\, \Pr(\sigma_i^*=+1) = \Pr(\sigma_i^*=-1) = 0.5$).\footnote{While in machine learning the `*' symbol is often used to denote optimal variables, to be consistent with the bootstrap literature our usage of this symbol denotes pseudo-variables.}
    \item Create pseudo-performances  $Y^* \coloneqq \hat Y + \xi^*$, to obtain pseudo-samples for $\hat Y$ and $\hat \rho (\pi, {k+\delta})$ as $\hat Y^* = \Phi(\Phi^\top  \Phi)^{-1}\Phi^\top Y^*$ and  $\hat \rho (\pi, {k+\delta})^* = \phi(k+\delta)(\Phi^\top  \Phi)^{-1}\Phi^\top Y^*.$
\end{enumerate}
Steps 2 and 3 can be repeated to re-sample up to $B \leq 2^k$ \textit{similar} sequences of past performance $Y^*$, from a \textit{single} observed sequence $Y$ of length $k$, while also preserving the time-series structure. This unreasonable property led \citet{mammen1993bootstrap} to coin the term `wild bootstrap'.
For a brief discussion on \textit{why} wild bootstrap works, see Appendix \ref{apx:sec:whyboot}.

Given these multiple pseudo-samples, we can now obtain an empirical distribution for pseudo $\texttt{t}$-statistic, $\texttt{t}^*$.
Let the pseudo-sample standard deviation be $\hat s^*$, where $\hat s^{*2} \coloneqq \phi(k+\delta)(\Phi^\top\Phi)^{-1}\Phi^\top  \hat \Omega^* \Phi (\Phi^\top\Phi)^{-1} \phi(k+\delta)^\top$, where $\hat \Omega^*$ is a diagonal matrix containing the square of the pseudo-errors $\hat \xi^* \coloneqq \hat Y^* -  Y^*$.
Let $\texttt{t}^* \coloneqq (\hat \rho(\pi, k+\delta)^* - \hat \rho(\pi, k+\delta))/\hat s^*$.
Then an $\alpha$-percentile $\texttt{t}^*_\alpha$ of the empirical distribution of $\texttt{t}^*$ is used to estimate the $\alpha$-percentile of $\texttt{t}$'s distribution.

Finally, we can define $\mathscr C$ to use the wild bootstrap to produce CIs.
To ensure this is principled, we leverage a property proven by \citet{djogbenou2019asymptotic} and show in the following theorem that the CI for $\rho(\pi, k+\delta)$ obtained using pseudo-samples from wild bootstrap is \textit{consistent}.
For simplicity, we restrict our focus to settings where $\phi$ is the Fourier basis (see Appendix \ref{apx:sec:coverageproof} for more discussion).
\begin{thm}[Consistent Coverage] 
\thlabel{thm:consistentcoverage}
Under \thref{ass:smooth,ass:fullsupport}, if the trajectories $\{H_i\}_{i=1}^k$ are independent and if $\phi(x)$ is a Fourier basis, then as $k \rightarrow \infty$,
\begin{align}
\Pr\left (\rho(\pi, k+\delta) \in  \left [\hat \rho (\pi, k+\delta) - \hat s \texttt{t}^*_{1 - \alpha/2}, \,\, \hat \rho (\pi, k+\delta) - \hat s \texttt{t}^*_{\alpha/2} \right] \right) \rightarrow 1 - \alpha. 
\end{align}
\end{thm}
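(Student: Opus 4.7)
}

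The plan is to reduce the claim to the asymptotic validity result of \citet{djogbenou2019asymptotic} for wild bootstrap inference in heteroscedastic linear regression. That result states that, under appropriate moment and design conditions, the conditional distribution of the wild-bootstrapped $\texttt{t}$-statistic converges (in probability) to the same limit as the true $\texttt{t}$-statistic, which implies consistent quantile estimation and hence consistent coverage. My proof will therefore consist of (i) casting SPIN's forecasting setup as a heteroscedastic linear regression with independent errors, (ii) verifying the moment and design regularity conditions, and (iii) invoking Djogbenou et al. to conclude.

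First, I would rewrite the observed counterfactual estimates as a regression. Setting $u_i \coloneqq \hat\rho(\pi,i) - \rho(\pi,M_i)$, unbiasedness of per-decision importance sampling under \thref{ass:fullsupport} gives $\mathbb{E}[u_i \mid M_i]=0$, and combining with \thref{ass:smooth} yields
\begin{equation}
\hat\rho(\pi,i) \;=\; \phi(i)\,w + \varepsilon_i, \qquad \varepsilon_i \coloneqq \xi_i + u_i, \qquad \mathbb{E}[\varepsilon_i]=0.
\end{equation}
Independence of $\{\varepsilon_i\}$ follows from the hypothesis that trajectories are independent (so $\{u_i\}$ are independent conditional on $\{M_i\}$) together with the independence of $\{\xi_i\}$ in \thref{ass:smooth}, after marginalising over $\{M_i\}$. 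Heteroscedasticity is allowed because the behaviour policies $\beta_i$ and the sampled $M_i$ differ across episodes; this is exactly the regime wild bootstrap is designed for.

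Next, I would verify the regularity conditions. Under \thref{ass:fullsupport}, the per-decision importance ratios are each bounded by $1/C$ and each reward lies in $[-R_{\max},R_{\max}]$, so $\hat\rho(\pi,i)$ takes values in a bounded interval (using the geometric discount $\gamma$ to bound the effective horizon, plus standard truncation arguments for the per-decision estimator); hence $\varepsilon_i$ has finite moments of every order, uniformly in $i$, which trivially satisfies the $(4+\eta)$-th moment condition required by Djogbenou et al. For the design, restricting to a Fourier basis $\phi$ means the rows of $\Phi$ are uniformly bounded in $\ell_\infty$ and $\tfrac{1}{k}\Phi^\top\Phi$ converges to a positive-definite matrix as $k\to\infty$ by a Riemann-sum argument on the orthogonality relations of sines and cosines, so the minimum eigenvalue of $\Phi^\top\Phi$ grows linearly in $k$. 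The evaluation covector $\phi(k+\delta)$ is likewise bounded, so the prediction variance $\hat s^2$ is of order $1/k$ and the standardised $\texttt{t}$-statistic is well-behaved.

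With these pieces in place, the Djogbenou et al. theorem yields that, conditional on $\mathcal D$, the empirical distribution of $\texttt{t}^*$ converges weakly (in probability) to the same limiting distribution as $\texttt{t}$. Standard Polya-type arguments then give $\texttt{t}^*_{\alpha}\to \texttt{t}_{\alpha}$ for every continuity point, and combining the two one-sided quantile convergences with the definition of the CI gives $\Pr(\rho(\pi,k+\delta)\in[\hat\rho(\pi,k+\delta)-\hat s\,\texttt{t}^*_{1-\alpha/2},\;\hat\rho(\pi,k+\delta)-\hat s\,\texttt{t}^*_{\alpha/2}])\to 1-\alpha$. The main obstacle I anticipate is the careful verification of the design-matrix regularity for the Fourier basis at the rate required by Djogbenou et al. (in particular, a uniform bound on $\phi(k+\delta)(\Phi^\top\Phi)^{-1}\phi(k+\delta)^\top$ as $k$ grows and $\delta$ is taken fixed), together with the bookkeeping needed to confirm that the heteroscedasticity induced by changing behaviour policies $\{\beta_i\}$ stays within the envelope that wild bootstrap tolerates; the moment conditions themselves are mild thanks to the full-support assumption and bounded rewards.
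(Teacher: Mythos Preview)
Your proposal is correct and follows essentially the same route as the paper: cast the forecasting problem as a heteroscedastic linear regression with independent, bounded errors (via \thref{ass:smooth,ass:fullsupport} and the independence of trajectories), verify the design/moment regularity for the Fourier basis, and invoke the asymptotic validity result of \citet{djogbenou2019asymptotic} to transfer quantiles of $\texttt{t}^*$ to $\texttt{t}$. The one device the paper uses that you do not mention explicitly is the normalization of the Fourier basis by $C=\sqrt{d+1}$ so that $b\coloneqq\phi(k+\delta)^\top$ satisfies $b^\top b=1$, which lets Djogbenou et al.'s theorem apply verbatim to the prediction $\texttt{t}$-statistic without any rescaling; this is precisely the ``design-matrix regularity'' bookkeeping you flagged as the main obstacle.
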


\textbf{Remark:} We considered several factors when choosing the wild bootstrap to create pseudo-samples of $\hat \rho(\pi, {k+\delta})$:
(a) because of the time-series structure, there exists no joint distribution between the deterministic sequence of time indices, $X$, and the stochastic performance estimates, $Y$, (b) trajectories from only a \textit{single} sequence of $\{M_i\}_{i=1}^k$ are observed, (c) trajectories could have been generated using different $\beta_i$'s leading to \textit{heteroscedasticity} in the performance estimates $\{\hat \rho(\pi, i)\}_{i=1}^k$, (d) different policies $\pi$ can lead to different distributions of performance estimates, even for the same behavior policy $\beta$, and (e) even for a fixed $\pi$ and $\beta$, performance estimates $\{\hat \rho(\pi, i)\}_{i=1}^k$ can exhibit heteroskedasticty due to inherent stochasticity in $\{M_i\}_{i=1}^k$ as mentioned in \thref{ass:smooth}.
These factors make popular approaches like pairs bootstrap, residual bootstrap, and block bootstrap not suitable for our purpose. 
In contrast, the wild bootstrap can take all these factors into account. 
More discussion on other approaches is available in Appendix \ref{apx:sec:otherboot}.

\section{Implementation Details}

\begin{figure}
    \centering
    \includegraphics[width=0.95\textwidth]{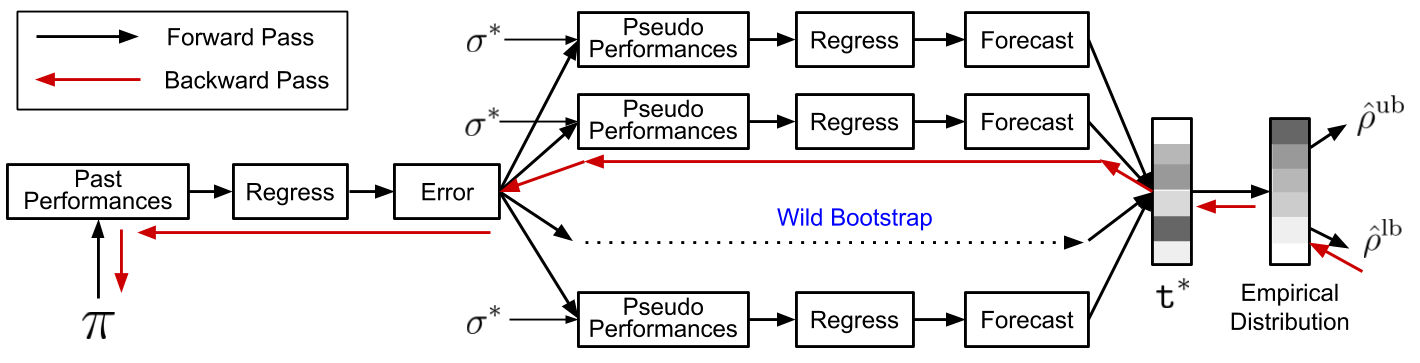}
    \caption{To search for a candidate policy $\pi_c$, regression is first used to analyze the trend of a given policy's past performances. 
    Wild bootstrap then leverages Rademacher variables $\sigma^*$ and the errors from regression to create pseudo-performances.
    Based on these pseudo-performances, an empirical distribution of the pseudo $\texttt{t}$-statistic, $\texttt{t}^*$, of the estimate of future performance, is obtained.
    The candidate policy $\pi_c$ is found using a differentiation based optimization procedure that maximizes the lower bound, $\hat \rho^\text{lb}$, computed using the empirical distribution of $\texttt{t}^*$.
    \vspace{-5pt}}
    \label{fig:diffLB}
\end{figure}

Notice that as the CI $[\hat \rho^\text{lb}(\pi), \hat \rho^\text{ub}(\pi)]$ obtained from $\mathscr C$ is based on the wild bootstrap procedure, a gradient based optimization procedure for maximizing the lower bound $\hat \rho^\text{lb}(\pi)$ would require differentiating through the entire bootstrap process.
Figure \ref{fig:diffLB} illustrates the high-level steps in this optimization process.
More elaborate details and complete algorithms are deferred to Appendix \ref{sec:apx:algorithm}.

Further, notice that a smaller amount of data results in greater uncertainty and thus wider CIs.
%
While a tighter CI during candidate policy search can be obtained by combining all the past $\mathcal D_\text{train}$ to increase the amount of data,  each safety test should ideally be independent of all the previous tests, and should therefore use data that has never been used before.
While it is possible to do so, using only new data for each safety test would be data-inefficient.
%

To make our algorithm more data efficient, similar to the approach of \citet{thomas2019preventing}, 
we re-use the test data in subsequent tests.
As illustrated by the black dashed arrows in Figure \ref{fig:Seldonian}, this modification introduces a subtle source of error because the data used in consecutive tests are not completely independent. 
However, the practical advantage of this approach in terms of tighter confidence intervals can be significant.
Further, as we demonstrate empirically, the error introduced by re-using test data can be negligible in comparison to the error due to the false assumption of stationarity.  

\section{Empirical Analysis}

In this section, we provide an empirical analysis on two domains inspired by safety-critical real-world problems that exhibit non-stationarity.
In the following, we first briefly discuss these domains, and in Figure \ref{fig:results} we present a summary of results for eight settings (four for each domain).
A more detailed description of the domains and the experimental setup is available in Appendix \ref{apx:sec:experiments}.

\noindent\textbf{Non-Stationary Recommender System (RecoSys):} 
In this domain, a synthetic recommender system interacts with a user whose interests in different products change over time.
Specifically, the reward for recommending each product varies in a seasonal cycle.
Such a scenario is ubiquitous in industrial applications, and updates to an existing system should be made responsibly; if it is not ensured that the new system is better than the existing one, then it might result in a loss of revenue.

\begin{figure}
    \centering
    \includegraphics[width=0.9\textwidth]{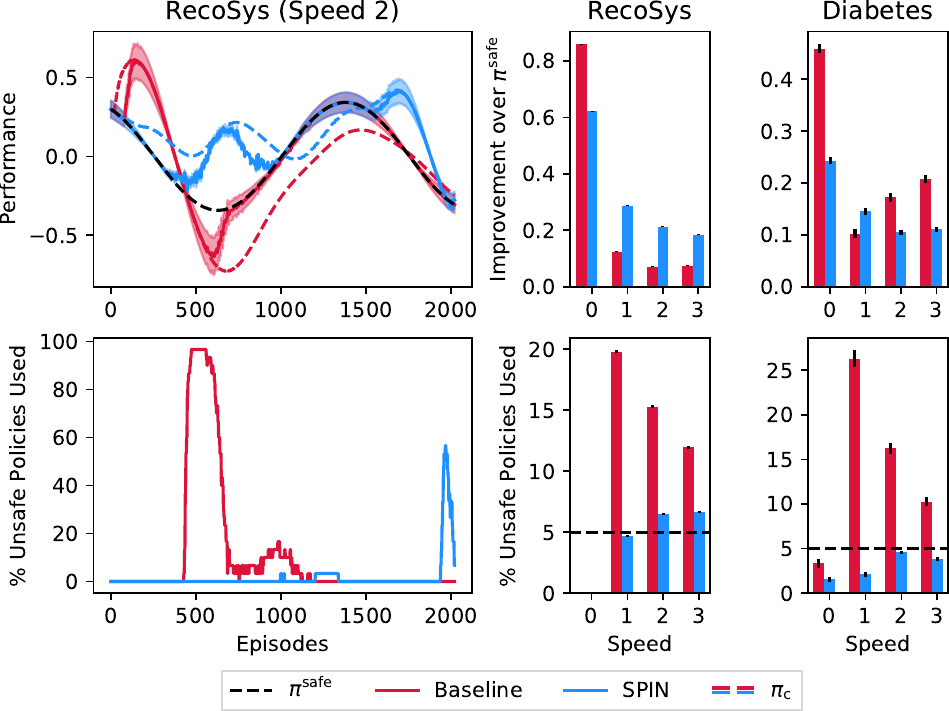}
    \caption{
    (Top-left) An illustration of a typical learning curve. Notice that SPIN updates a policy whenever there is room for a significant improvement.
    %
    %
    %
    (Middle and Right) 
    As our main goal is to ensure safety, \textit{while being robust to how a user of our algorithm sets the hyper-parameters (HPs)}, we do \textit{not} show results from the best HP.
    This choice is motivated by the fact that best performances can often be misleading as it only shows what an algorithm \textit{can} achieve and not what it is \textit{likely} to achieve \cite{jordan2018using}.
    Therefore, we present the aggregated results averaged over the \textit{entire sweep} of $1000$ HPs per algorithm, per speed, per domain. 
   Shaded regions and intervals correspond to the standard error.
    See Appendix \ref{apx:sec:hyper} and \ref{apx:sec:plots} for more details on these plots and on the data aggregation process.
    \vspace{-15pt}
    }
    \label{fig:results}
\end{figure}

\noindent\textbf{Non-Stationary Diabetes Treatment: }
This environment is based on an open-source implementation \cite{simglucose} of the FDA approved Type-$1$ Diabetes Mellitus simulator (T1DMS) \cite{man2014uva,kovatchev2009silico} for treatment of Type-1 diabetes.
Each episode consists of a day ($1440$ timesteps--one for each minute in a day) in an \textit{in-silico} patient’s life and the blood-glucose level increases when a patient consumes a meal.
Too high or too low blood-glucose can lead to fatal conditions like hyperglycemia and hypoglycemia, respectively, and an insulin dosage is therefore required to minimize the risk associated with these conditions.
While a doctor's initial dosage prescription is often available, the insulin sensitivity of a patient’s internal body organs varies over time, inducing non-stationarity that should be accounted for.
%
%
The goal of the system is to responsibly update the doctor's initial prescription, ensuring that the treatment is only made better. 

The diabetes treatment problem is particularly challenging as the performance trend of policies in this domain can violate \thref{ass:smooth} (see Appendix \ref{apx:sec:domains} for more discussion).
This presents a more realistic challenge as every policy's performance trend for real-world problems cannot be expected to follow \textit{any} specific trend \textit{exactly}--one can only hope to obtain a coarse approximation of the trend.

For both the domains, (a) we set $\pi^\text{safe}$ to a near-optimal policy for the starting MDP $M_1$, representing how a doctor would have set the treatment initially, or how an expert would have set the recommendations,
(b) we set the safety level $(1-\alpha)$ to $95\%$,
(c) we modulate the ``speed'' of non-stationarity, such that higher speeds represent a faster rate of non-stationarity and a speed of zero represents a stationary domain, and
(d) we consider the following two algorithms for comparison: (i) \textbf{SPIN: } The proposed algorithm that ensures safety while taking into account the impact of non-stationarity, and (ii) \textbf{Baseline: } An algorithm similar to those used in prior works \cite{thomas2015high,thomas2019preventing,metevier2019offline}, which is aimed at ensuring safety but ignores the impact of non-stationarity (see Appendix \ref{apx:sec:baseline} for details).

\myparagraph{Results: }
An ideal algorithm should adhere to the safety constraint in \eqref{eqn:constraint}, maximize future performance, and also be robust to hyper-parameters even in the presence of non-stationarity.
Therefore, to analyse an algorithm's behavior, we aim to investigate the following three questions:

\textbf{Q1: }  \textit{How often does an algorithm violate the safety constraint $\rho(\texttt{alg}(\mathcal D), {k+\delta}) \geq \rho(\pi^\text{safe}, k + \delta)$?}
We present these results for SPIN and Baseline on both the domains in Figure \ref{fig:results} (bottom).
Baseline ensures safety for the stationary setting (speed $=0$) but has a severe failure rate otherwise.
Perhaps counter-intuitively, the failure rate for Baseline is much \textit{higher} than $5\%$ for \textit{slower} speeds.
This can be attributed to the fact that at higher speeds, greater reward fluctuations result in more variance in the performance estimates, causing the CIs within Baseline to be looser, and thereby causing Baseline to have insufficient confidence of policy improvement to make a policy update. 
Thus, at higher speeds Baseline becomes safer as it reverts to $\pi^\text{safe}$ more often.
This calls into question 
the popular misconception that the stationarity assumption is not severe when changes are slow, as in practice slower changes might be harder for an algorithm to identify, and thus might jeopardize safety. 
By comparison, even though bootstrap CIs do not have guaranteed coverage when using a finite number of samples \cite{efron1994introduction}, it still allows SPIN to maintain a failure rate near the $5\%$ target.

\textbf{Q2: }  \textit{What is the performance gain of an algorithm over the existing known safe policy $\pi^\text{safe}$?}
Notice that any algorithm $\texttt{alg}$ can satisfy the safety constraint in \eqref{eqn:constraint} by \textit{never} updating the existing policy $\pi^\text{safe}$.
Such an \texttt{alg} is not ideal as it will provide no performance gain over $\pi^\text{safe}$.
In the stationary settings, Baseline provides better performance gain than SPIN while maintaining the desired failure rate.
However, in the non-stationary setting, the performance gain of SPIN is higher for the recommender system. 
For diabetes treatment,  both the methods provide similar performance gain but only SPIN does so while being safe (see the bottom-right of Figure \ref{fig:results}).
%
The similar performance of Baseline to SPIN despite being unsafe can be attributed to occasionally deploying better policies than SPIN, but having this improvement negated by deploying policies worse than the safety policy (e.g., see the top-left of Figure \ref{fig:results}).

\textbf{Q3:} \textit{How robust is SPIN to hyper-parameter choices?}
To analyze the robustness of our method to the choice of  relative train-test data set sizes, the objective for the candidate policy search, and to quantify the benefits of the proposed safety test, we provide an ablation study on the RecoSys domain, for all speeds ($0,1,2,3$) in Table \ref{tab:1}.
All other experimental details are the same as in Appendix E.3, except for (iv), where mean performance, as opposed to the lower bound, is optimized during the candidate search. 
Table \ref{tab:1} shows that the safety violation rate of SPIN is robust to such hyper-parameter changes.
However, it is worth noting that too small a test set can make it harder to pass the safety-test, 
and so performance improvement is small in (i).
In contrast, if the proposed safety check procedure for a policy's performance on a non-stationary MDP is removed, then the results can be catastrophic, as can be seen in (v).

\begin{table}[t]
    \centering
    \begin{tabular}{ccc|cccc|cccc}
        & & train-test & 0 & 1 & 2 & 3 & 0 & 1& 2& 3 \\
        \hline
        (i) & SPIN & 75\%--25\%  & .56 & .22 & .17 & .14 & 0.0 & 3.6 & 5.1 & 5.4 \\
        (ii)  & SPIN & 25\%--75\% & .48 & .29 & .21 & .19 & 0.0 & 4.6 & 6.5 & 7.0 \\
        (iii) & SPIN  & 50\%--50\% & .62 & .28 & .21 & .18 & 0.0 & 4.7 & 6.4 & 6.6\\
        (iv) & SPIN-mean & 50\%--50\% & .70 & .28 & .24 & .19 & 0.2 & 4.9 & 6.3 & 7.1 \\
        \hline
        (v) & NS + No safety & 100\%--0\% & .73 & .22 & .16  &.19 & 9.4 & 37.6 & 40.2 & 38.6 \\
        (vi)& Stationary + Safety & 50\%--50\% & .85 & .12 & .07 & .07 & 0.0 & 19.8 & 15.3 & 11.9 
    \end{tabular}
    \caption{Ablation study on the RecoSys domain. (Left) Algorithm. (Middle) Improvement over $\pi^\text{safe}$. (Right) Safety violation percentage. Rows (iii) and (vi) correspond to results in Figure \ref{fig:results}.}
    \label{tab:1}
    \vspace{-20pt}
\end{table}

\section{Conclusion}

In this paper, we took several first steps towards  ensuring safe policy improvement for NS-MDPs.
We discussed the difficulty of this problem and presented an algorithm for ensuring the safety constraint in \eqref{eqn:constraint} under the assumption of a smooth performance trend.  
Further, our experimental results call into question 
the popular misconception that the stationarity assumption is not severe when changes are slow. 
In fact, it can be quite the opposite: Slow changes can be more \textit{deceptive} and can make existing algorithms,  which do not account for non-stationarity, more susceptible to deploying unsafe policies.
%
%
%

\section{Acknowledgement}
We are thankful to Prof. James MacKinnon for sharing valuable insights and other useful references for the wild bootstrap technique.
We are also thankful to Shiv Shankar and the anonymous reviewers for providing feedback that helped improve the paper.

This work was supported in part by NSF Award \#2018372 and gifts from Adobe Research.
Further, this work was also supported in part by NSERC and CIFAR, particularly through funding the Alberta Machine Intelligence Institute (Amii) and  the CCAI Chair  program.

Research reported in this paper was also sponsored in part by the CCDC Army Research Laboratory under Cooperative Agreement W911NF-17-2-0196 (ARL IoBT CRA). The views and conclusions contained in this document are those of the authors and should not be interpreted as representing the official policies, either expressed or implied, of the Army Research Laboratory or the U.S. Government. The U.S. Government is authorized to reproduce and distribute reprints for Government purposes notwithstanding any copyright notation herein.

\section{Broader Impact} 
\label{sec:broaderImpactpst}
\paragraph{Applications:} 
We hope that our work brings more attention to the understudied challenge of ensuring safety that is critical for the responsible application of RL algorithms to real-world non-stationary problems.
%
%
For example, researchers have proposed the use of reinforcement learning algorithms for several medical support systems, ranging from diabetes management \citep{bastani2014model}, to epilepsy \citep{pineau2009treating}, to sepsis treatment \citep{saria2018individualized}.
These problems involve sequential decision-making, where autonomous systems can improve upon a doctor's prescribed policy by adapting to the non-stationary dynamics of the human body as more data becomes available.
In fact, almost all human-computer interaction systems (medical treatment, tutorial recommendations, advertisement marketing, etc.) have a common non-stationary component: humans.
Also, in all these use-cases, it is important to ensure that the updates are safe.
That is, the updated system should not lead to undesirable financial/medical conditions and should only improve upon the existing policy (e.g., doctor's initial prescription).

\paragraph{Ethical concerns: } The proposed method is focused towards ensuring \textit{safety}, defined in terms of the performance of a system.
The proposed algorithm to do so makes use of data generated by interacting with a non-stationary MDP.
As discussed above, in many cases, non-stationary MDPs are associated with human beings.
This raises additional issue of \textit{safety} concerning data privacy and security.
The proposed method \textit{does not} resolve any of these issues, and therefore additional care should be taken for adequate data management.

\paragraph{Note to a wider audience:}
The proposed method relies upon  smoothness assumptions that need not be applicable to all problems of interests.
For example, when there are jumps or breaks in the time series, then the behavior of the proposed method is not ensured to be safe. 
Our method also makes use of importance sampling which requires access to the probabilities of the past actions taken under the behavior policy $\beta$. 
If these probabilities are not available and are instead estimated from data then it may introduce bias and may result in a greater violation of the safety constraint.
Further, all of our experiments were conducted on simulated domains, where the exact nature of non-stationarity may \textit{not} reflect the non-stationarity observed during actual interactions in the physical world.
Developing simulators that closely mimic the physical world, without incorporating systematic and racial bias, remains an open problem and is complementary to our research.
Hence, caution is warranted while quoting results from these simulation experiments.

\paragraph{Future research directions: } There are several exciting directions for future research.
We used the ordinary importance sampling procedure to estimate past performances of a policy. 
However, it suffers from high variance and leveraging better importance sampling procedures \cite{jiang2015doubly,thomas2016data} can be directly beneficial to obtain better estimates of past performances.
Leveraging time-series models like ARIMA \cite{chen2009arima} and their associated wild-bootstrap methods \cite{godfrey2005wild, djogbenou2015bootstrap, friedrich2020autoregressive} can be a fruitful direction for extending our algorithm to more general settings that have correlated noises or where the performance trend, both locally and globally, can be better modeled using auto-regressive functions.
Further, goodness-of-fit tests \cite{chen2003empirical} could be used to search for a time-series model that best fits the application.

\bibliography{mybib}
\bibliographystyle{abbrvnat}

\clearpage
\appendix

\setcounter{thm}{0}

\onecolumn

\begin{center}
    \Large
    \textbf{Towards Safe Policy Improvement 
    for \\ Non-Stationary MDPs (Supplementary Material)}
\end{center}

\section{Notation}

\begin{table}[h]
    \centering
    \begin{tabular}{c|l c}
    \hline \\
    Symbol & Meaning \\
    \hline \\
    $M_i$  & MDP for episode $i$. \\
     $\mathcal S$    & State set.  \\
    $\mathcal A$     & Action set. \\
    $\mathcal P_i$ & Transition dynamics for $M_i$. \\
    $\mathcal R_i$ & Reward function for $M_i$. \\
    $\gamma$ & Discounting factor. \\
    $d_0$ & Starting state distribution. \\
    $\pi$ & Policy. \\
    $\pi^\text{safe}$ & Given baseline safe policy. \\
    $\pi_c$ & A candidate policy that can possibly be used for policy improvement. \\
    $\beta_i$ & Behavior policy used to collect data for episode $i$. \\
    $G(\pi, m)$ & Discounted episodic return of $\pi$ for MDP $m$. \\
    $\rho(\pi, m)$ & Expected discounted episodic return of $\pi$ for MDP $m$. \\
    $\rho(\pi, i)$ & Expected discounted episodic return for episode $i$. \\
    $\hat \rho(\pi, i)$ & An estimate of $\rho(\pi, i)$. \\
    $\hat \rho^\text{lb}(\pi)$ & Lower bound on the future performance of $\pi$. \\
    $\hat \rho^\text{ub}(\pi)$ & Upper bound on the future performance of $\pi$. \\
    $k$ & Current episode number. \\
    $\delta$ & Number of episodes into the future. \\
    $H_i$ & Trajectory during episode $i$. \\
    $\mathcal D$ & Set of trajectories. \\
    $\mathcal D_\text{train}$ & Partition of $\mathcal D$ used for searching $\pi_c$. \\
    $\mathcal D_\text{test}$ & Partition of $\mathcal D$ used for safety test. \\
    $\texttt{alg}$ & An algorithm. \\
    $\alpha$ & Quantity to define the desired safety level $1 - \alpha$. \\
    $X$ & Time indices for time-series. \\
    $Y$ & Time series values corresponding to $X$. \\
    $\hat Y$ & Estimates for $Y$. \\
    $\phi$ & Basis function for time series forecasting. \\
    $\Phi$ & Matrix containing basis for different episode numbers. \\
    $w$ & parameters for time series forecasting. \\
    $\xi$ & Noise in the observed performances. \\
    $\hat \xi$ & Estimate for $\xi$. \\ 
    $\hat \Omega$ & Diagonal matrix containing $\hat \xi^2$. \\
    $\hat s$ & Standard deviation of the forecast. \\
    $\texttt{t}$ & $\texttt{t}-$statistic for the forecast. \\
    $\texttt{t}_\alpha$ & $\alpha-$quantile of the $\texttt{t}$ distribution. \\
    $\mathscr C$ & Function to obtain confidence interval on future performance (using wild bootstrap). \\
    $\sigma^*$ & Rademacher random variable. \\
    $Y^*$ & Pseudo-variable for $Y$.\\
    $\hat Y^*$ & Pseudo-variable for $\hat Y$.\\
    $\xi^*$ & Pseudo-variable for $\xi$.\\
    $\hat \xi^*$ & Pseudo-variable for $\hat \xi$.\\
    $\texttt{t}^*$ & Pseudo-variable for $\texttt{t}$.\\
    $\texttt{t}_\alpha^*$ & Pseudo-variable for $ \texttt{t}_\alpha$.\\
    $\hat s^*$ & Pseudo-variable for $\hat s$. 
    \end{tabular}
    \caption{List of symbols used in the main paper and their associated meanings.}
    \label{tab:my_label}
\end{table}

\clearpage

\section{Hardness Results}
\label{apx:proof}
Several works in the past have presented performance bounds for a policy when executed on an approximated stationary MDP \cite{whitt1978approximations,kakade2002approximately,kearns2002near,ravindran2004approximate,pirotta2013safe,achiam2017constrained}.
See Section 6 by \citet{neuro} for a textbook reference. 
The technique of our proof for \thref{thm:lipbound} regarding non-stationary MDPs is based on these earlier results.

\begin{thm}[Lipschitz smooth performance]
If $\exists \epsilon_P \in \mathbb{R}$ and $\exists \epsilon_R \in \mathbb{R}$ such that for any $M_k$ and $M_{k+1}$,
$
\forall s \in \mathcal S, \forall a \in \mathcal A,\,\,\, \lVert \mathcal P_{k}(\cdot| s, a) -\mathcal  P_{k+1}(\cdot| s, a)\rVert_1 \leq \epsilon_P \,\, \text{and} \,\, | \mathbb{E}[\mathcal R_{k}(s, a)] - \mathbb{E}[\mathcal  R_{k+1}(s, a)]| \leq \epsilon_R
$,
then the performance of any policy $\pi$ is Lipschitz smooth over time, with Lipschitz constant $L \coloneqq  \left (\frac{\gamma R_{\text{max}}}{(1 - \gamma)^2}\epsilon_P + \frac{1}{1 - \gamma}\epsilon_R \right)$.
That is,
\begin{align}
    \forall k \in \mathbb{N}_{> 0}, \forall \delta \in \mathbb{N}_{> 0}, \,\, \,\, |\rho(\pi, k) - \rho(\pi, {k+\delta})| \leq  L \delta. \label{eqn:apx:lipbound}
\end{align}
\end{thm}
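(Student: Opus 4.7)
The plan is to first establish the one-step Lipschitz bound $|\rho(\pi, k) - \rho(\pi, k+1)| \leq L$ via a value-function perturbation argument, and then extend to the $\delta$-step bound by a simple telescoping/triangle-inequality step. This is the standard ``simulation lemma'' style of argument used in stationary approximation results, adapted here to consecutive MDPs in the non-stationary sequence.

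For the one-step bound, I would let $V^\pi_k$ denote the value function of $\pi$ in the stationary MDP $M_k$, so that $\rho(\pi, k) = \mathbb{E}_{S^0 \sim d^0}[V^\pi_k(S^0)]$. Using the Bellman equations for $M_k$ and $M_{k+1}$, I would write the pointwise difference $\Delta_k(s) := V^\pi_k(s) - V^\pi_{k+1}(s)$ and split the transition term via the identity
\begin{align}
\mathcal P_k(s'|s,a) V^\pi_k(s') - \mathcal P_{k+1}(s'|s,a) V^\pi_{k+1}(s') = \mathcal P_k(s'|s,a)\,\Delta_k(s') + \bigl(\mathcal P_k(s'|s,a) - \mathcal P_{k+1}(s'|s,a)\bigr) V^\pi_{k+1}(s').
\end{align}
Taking absolute values and using $|V^\pi_{k+1}(s')| \leq R_{\max}/(1-\gamma)$ together with the assumed bounds on $\|\mathcal P_k - \mathcal P_{k+1}\|_1$ and on the reward means yields the recursive inequality
\begin{align}
\|\Delta_k\|_\infty \leq \epsilon_R + \gamma\,\frac{R_{\max}}{1-\gamma}\,\epsilon_P + \gamma \|\Delta_k\|_\infty,
\end{align}
which rearranges to $\|\Delta_k\|_\infty \leq \frac{1}{1-\gamma}\epsilon_R + \frac{\gamma R_{\max}}{(1-\gamma)^2}\epsilon_P = L$. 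Taking the expectation over $S^0 \sim d^0$ then gives $|\rho(\pi,k) - \rho(\pi,k+1)| \leq L$.

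The second step is a straightforward telescoping: by the triangle inequality,
\begin{align}
|\rho(\pi, k) - \rho(\pi, k+\delta)| \leq \sum_{j=0}^{\delta-1} |\rho(\pi, k+j) - \rho(\pi, k+j+1)| \leq \delta L,
\end{align}
which is exactly \eqref{eqn:apx:lipbound}. The example witnessing tightness (referenced in the paper) would be constructed in a two-state MDP with a single action where one transition probability shifts by exactly $\epsilon_P$ each episode, chosen so that all of the inequalities above hold with equality.

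The only mildly delicate step is the value-difference recursion: one must be careful to pair $\mathcal P_k$ with $\Delta_k$ (not $\mathcal P_{k+1}$) in the first summand so that the $\gamma \|\Delta_k\|_\infty$ contraction cleanly emerges after taking $\sup_s$, and to bound $V^\pi_{k+1}$ (the one left over in the cross term) by $R_{\max}/(1-\gamma)$ uniformly, which is valid since rewards are bounded in $[-R_{\max}, R_{\max}]$ and $\gamma < 1$. Everything else is routine.
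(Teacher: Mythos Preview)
Your proof is correct but follows a genuinely different route from the paper's. The paper works with the discounted occupancy measure, writing $\rho(\pi,M_i)=(1-\gamma)^{-1}\sum_s d^\pi(s,M_i)\sum_a\pi(a|s)R_i(s,a)$, splitting the consecutive difference into a reward-shift term and a distribution-shift term $\sum_s|d^\pi(s,M_i)-d^\pi(s,M_{i+1})|$, and then invoking an external lemma of \citet{achiam2017constrained} to bound the latter by $\gamma(1-\gamma)^{-1}\|(P_i^\pi-P_{i+1}^\pi)d^\pi(\cdot,M_i)\|_1$, which is in turn bounded via $\epsilon_P$. The telescoping step is the same as yours. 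Your argument instead bounds $\|V_k^\pi-V_{k+1}^\pi\|_\infty$ directly through the Bellman recursion and a contraction, which is the classical simulation-lemma route; it is more elementary and fully self-contained (no need for the Achiam et~al.\ result), and in fact yields the slightly stronger statewise bound $\|V_k^\pi-V_{k+1}^\pi\|_\infty\le L$ before averaging over $d^0$. The paper's decomposition, on the other hand, makes explicit where the $(1-\gamma)^{-2}$ arises as a product of a $1/(1-\gamma)$ from the return normalization and a $1/(1-\gamma)$ from the occupancy perturbation, which some readers may find more interpretable. One small point you glossed over: since $\rho(\pi,k)=\mathbb{E}[\rho(\pi,M_k)]$ with $M_k$ random, your per-realization bound $|\rho(\pi,M_k)-\rho(\pi,M_{k+1})|\le L$ should be followed by a one-line remark that the hypothesis holds for every realization, so the bound survives the outer expectation; the paper handles this by passing to the supremum over $M_k,M_{k+\delta}$ at the outset.
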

\begin{proof}
We begin by noting that,
\begin{align}
    |\rho(\pi, k) - \rho(\pi, {k+\delta})| \leq \underset{M_k \in \mathcal{M}, M_{k+\delta} \in \mathcal{M}}{\text{sup}} |\rho(\pi, M_k) - \rho(\pi, M_{k+\delta})| \label{apx:eqn:supM}.
\end{align}
We now aim at bounding $|\rho(\pi, M_k) - \rho(\pi, M_{k+\delta})|$ in \eqref{apx:eqn:supM}.
Let $R_k(s,a) = \mathbb{E}[\mathcal R_k(s,a)]$, then notice that the on-policy distribution and the performance of a policy $\pi$ in the episode $k$ can be written as,
\begin{align}
    d^\pi(s, M_k) =& (1 - \gamma)\sum_{t=0}^{\infty} \gamma^t \Pr(S_t=s|\pi, M_k), \\
    \rho(\pi, M_k) =& (1-\gamma)^{-1}\sum_{s \in \mathcal S}d^\pi(s, M_k) \sum_{a \in \mathcal A} \pi(a|s) R_k(s,a).
    %
\end{align}
We begin the proof by expanding the absolute difference between the two performances as follows: 
\begin{align}
    &\lvert \rho(\pi, M_{k}) - \rho(\pi, M_{k+\delta}) \rvert \\
    =& \lvert \rho (\pi, M_{k}) - \rho(\pi, M_{k+1}) + \rho(\pi, M_{k+1}) - ... - \rho(\pi, M_{k+\delta-1}) + \rho(\pi, M_{k+\delta-1}) - \rho(\pi, M_{k+\delta}) \rvert \\
    \leq& \sum_{i=k}^{k+\delta-1}  \lvert \rho(\pi, M_{i}) - \rho(\pi, M_{i+1}) \rvert. \label{eqn:consecutive}
\end{align}
To simplify further, we introduce a temporary notation $\Delta(s,a) \coloneqq R_i(s,a) - R_{i+1}(s,a)$.
Now on expanding each of the consecutive differences in \eqref{eqn:consecutive} and multiplying by $(1-\gamma)$ on both sides:
\begin{align}
     & (1-\gamma)\lvert \rho(\pi, M_{i}) - \rho(\pi, M_{i+1}) \rvert 
        \\
      =& 
      \left \lvert  \sum_{s \in \mathcal S}d^\pi(s, M_i) \sum_{a \in \mathcal A} \pi(a|s) R_i(s,a) -  \sum_{s \in \mathcal S}d^\pi(s, M_{i+1}) \sum_{a \in \mathcal A} \pi(a|s) R_{i+1}(s,a)  \right \rvert \\
      =& 
      \left \lvert  \sum_{s \in \mathcal S} \sum_{a \in \mathcal A} \pi(a|s) \Big ( d^\pi(s, M_i)  R_i(s,a) - d^\pi(s, M_{i+1}) R_{i+1}(s,a)  \Big )   \right \rvert \\
      =& 
      \left \lvert  \sum_{s \in \mathcal S}\sum_{a \in \mathcal A} \pi(a|s)  \Big ( d^\pi(s, M_i)  (R_{i+1}(s,a) + \Delta(s,a)) - d^\pi(s, M_{i+1}) R_{i+1}(s,a)  \Big )   \right \rvert \\
      =& 
      \left \lvert  \sum_{s \in \mathcal S}\sum_{a \in \mathcal A} \pi(a|s)  \Big ( d^\pi(s, M_i) - d^\pi(s, M_{i+1})  \Big ) R_{i+1}(s,a) + \sum_{s \in \mathcal S}\sum_{a \in \mathcal A} \pi(a|s) d^\pi(s, M_i)  \Delta(s,a) \right \rvert. \label{apx:eqn:expanded}
\end{align}

In the following, we bound the terms in \eqref{apx:eqn:expanded} using the following three steps, 
(a) use Cauchy Schwartz inequality and bound each possible negative term with its absolute value,
(b) bound each reward $R_{i+1}(s,a)$ using $R_\text{max}$ and use the Lipschitz smoothness assumption to bound each $\Delta(s,a)$ using $\epsilon_R$, and
(c) equate sum of probabilities to one.
Formally, 

\begin{align}
     & (1 - \gamma) \lvert \rho(\pi, M_{i}) - \rho(\pi, M_{i+1}) \rvert 
     \\
      \overset{(a)}{\leq}&    \sum_{s \in \mathcal S} \sum_{a \in \mathcal A}  \pi(a|s) \left \lvert d^\pi(s, M_i) - d^\pi(s, M_{i+1})  \right \rvert \left \lvert R_{i+1}(s,a) \right \rvert +  \sum_{s \in \mathcal S}\sum_{a \in \mathcal A} \pi(a|s) d^\pi(s, M_i) \left \lvert \Delta(s,a) \right \rvert \\
      \overset{(b)}{\leq}&   R_{\text{max}}\sum_{s \in \mathcal S} \sum_{a \in \mathcal A} \pi(a|s) \left \lvert d^\pi(s, M_i) - d^\pi(s, M_{i+1})  \right \rvert +  \epsilon_R \sum_{s \in \mathcal S}\sum_{a \in \mathcal A} \pi(a|s) d^\pi(s, M_i)\\
      \overset{(c)}{=}&  R_{\text{max}}\sum_{s \in \mathcal S} \left \lvert d^\pi(s, M_i) - d^\pi(s, M_{i+1})  \right \rvert +  \epsilon_R. \label{apx:eqn:dbound}
\end{align}
To simplify \eqref{apx:eqn:dbound} further, we make use of the following property,

\begin{prop}[\citet{achiam2017constrained}]
    \label{prop:distributionbound}
    Let $P_i^\pi \in \mathbb{R}^{|\mathcal S| \times |\mathcal S|} $ be the transition matrix ($s'$ in rows and $s$ in columns) resulting due to $\pi$ and $P_i$, i.e., $\forall t, \,\, P_i^\pi(s',s) \coloneqq \Pr(S_{t+1}=s'|S_t=s, \pi, M_i)$, and let $d^{\pi}(\cdot, M_i) \in \mathbb{R}^{|\mathcal S|}$ denote the vector of probabilities for each state,  then\footnote{Note that the original result by \citet{achiam2017constrained} bounds the change in distribution between two different policies under the same dynamics. Here, we have modified the property for our case, where the policy is fixed but the dynamics are different.}
     \begin{align}
            \sum_{s \in \mathcal S}  \lvert d^\pi(s, M_i) - d^\pi(s, M_{i+1}) \rvert   \leq \gamma (1 - \gamma)^{-1} \left \lVert (P_i^{\pi} - P_{i+1}^{\pi}) d^{\pi}(\cdot, M_i)  \right \rVert _1.
    \end{align}
\end{prop}

Using Property \ref{prop:distributionbound},
     \begin{align}
          & \sum_{s \in \mathcal S}  \left \lvert  d^\pi(s,M_i) - d^\pi(s, M_{i+1}) \right \rvert \\
          \overset{(d)}{\leq}& \gamma (1 - \gamma)^{-1}  \sum_{s' \in \mathcal S} \left \lvert \sum_{s \in \mathcal S} \left ( P_i^\pi(s', s) -  P_{i+1}^\pi(s', s)\right ) d^{\pi}(s, M_i)  \right \rvert \\
          \leq& \gamma (1 - \gamma)^{-1}  \sum_{s' \in \mathcal S}  \sum_{s \in \mathcal S} \left \lvert  P_i^\pi(s',s) -  P_{i+1}^\pi(s',s) \right \rvert d^{\pi}(s, M_i)  \\
          =& \gamma (1 - \gamma)^{-1}  \sum_{s' \in \mathcal S}  \sum_{s \in \mathcal S} \left \lvert \sum_{a \in \mathcal A}  \pi(a|s) \Big (   \Pr(s'|s, a, M_i) -  \Pr(s'|s, a, M_{i+1}) \Big) \right \rvert d^{\pi}(s, M_i) \\  
          \leq& \gamma (1 - \gamma)^{-1}  \sum_{s' \in \mathcal S}  \sum_{s \in \mathcal S} \sum_{a \in \mathcal A}  \pi(a|s) \left \lvert  \Pr(s'|s, a, M_i) -  \Pr(s'|s, a, M_{i+1}) \right \rvert d^{\pi}(s, M_i)  
          \\
          =& \gamma (1 - \gamma)^{-1}    \sum_{s \in \mathcal S} \sum_{a \in \mathcal A}  \pi(a|s)  d^{\pi}(s, M_i) \sum_{s' \in \mathcal S} \left \lvert  \Pr(s'|s, a, M_i) -  \Pr(s'|s, a, M_{i+1}) \right \rvert  
          \\
          \overset{(e)}{\leq}& \gamma (1 - \gamma)^{-1}   \sum_{s \in \mathcal S} \sum_{a \in \mathcal A}  \pi(a|s)  d^{\pi}(s, M_i) \epsilon_P \\
          =& \gamma (1 - \gamma)^{-1} \epsilon_P, \label{eqn:distribution}
    \end{align}
where (d) follows from expanding the L1 norm of a matrix-vector product, and (e) follows from using the Lipschitz smoothness to bound the difference between successive transition matrices.
Combining \eqref{apx:eqn:dbound} and \eqref{eqn:distribution}, 

\begin{align}
    \lvert \rho(\pi, M_{i}) - \rho(\pi, M_{i+1}) \rvert \leq& (1 - \gamma)^{-1}  \left ( R_{\text{max}} \gamma (1 - \gamma)^{-1} \epsilon_P +  \epsilon_R  \right ). \\
    =& \frac{\gamma R_{\text{max}}}{(1 - \gamma)^2}\epsilon_P + \frac{1}{1 - \gamma}\epsilon_R. \label{eqn:single}
\end{align}
Finally, combining \eqref{eqn:consecutive} and \eqref{eqn:single}, 
\begin{align}
    \lvert \rho(\pi, M_{i}) - \rho(\pi, M_{i+\delta}) \rvert    \leq& \sum_{i=k}^{k+\delta-1} \left (\frac{\gamma R_{\text{max}}}{(1 - \gamma)^2}\epsilon_P + \frac{1}{1 - \gamma}\epsilon_R \right) \\
    =& \delta \left (\frac{\gamma R_{\text{max}}}{(1 - \gamma)^2}\epsilon_P + \frac{1}{1 - \gamma}\epsilon_R \right).
\end{align}
\end{proof}

\paragraph{Tightness of the bound: } In this paragraph, we present an NS-MDP where \eqref{eqn:apx:lipbound} holds with exact equality, illustrating that the bound given by \thref{thm:lipbound} is tight.

Consider the NS-MDP given in Figure \ref{fig:NSMDPbound}.
Let $\gamma = 0$ and let $\mathcal A = \{a\}$ such that the size of action set $|\mathcal A| = 1$.
%
%
Let the state set $\mathcal S = \{s_1, s_2\}$ and let the initial state for an episode always be state $s_1$.
Let rewards be in the range $[-1, +1]$ such that $R_\text{max} = 1$.
\begin{figure}[h]
    \centering
    \includegraphics[width=0.6\textwidth]{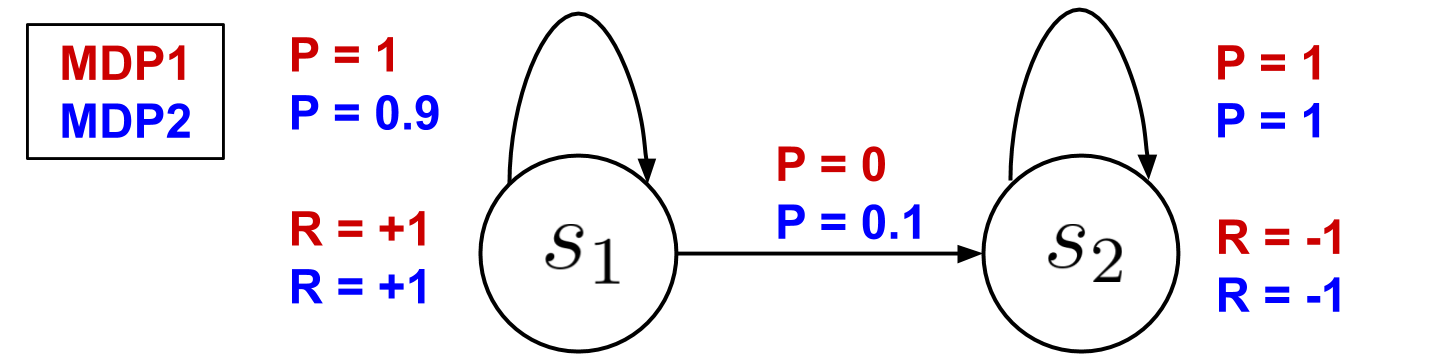}
    \caption{Example NS-MDP.}
    \label{fig:NSMDPbound}
\end{figure}

Notice that for NS-MDP in Figure \ref{fig:NSMDPbound}, $\epsilon_R = |\mathbb{E}[\mathcal R_1(s_1, a)] - \mathbb{E}[\mathcal R_2(s_1, a)]| = 0.2$ as
\begin{align}
    R_1(s_1, a) &= \mathbb{E}[\mathcal R_1(s_1, a)] = 1 \cdot (+1) + 0 \cdot (-1) = 1,
    \\
    R_2(s_1, a) &= \mathbb{E}[\mathcal R_2(s_1, a)] = 0.9 \cdot (+1) + 0.1 \cdot (-1) = 0.8.
\end{align}
Similarly, 
\begin{align}
    \epsilon_P= |\mathcal P_1(s_1|s_1, a) - \mathcal P_2(s_1|s_1, a)| +  |\mathcal P_1(s_2|s_1, a) - \mathcal P_2(s_2|s_1, a)| = 0.2.
\end{align}
Therefore, substituting the values $\gamma=0, R_\text{max}=1, \epsilon_P = \epsilon_R=0.2$, and $\delta=1$ in \eqref{eqn:apx:lipbound}, we get
\begin{align}
    \lvert \rho(\pi, M_{1}) - \rho(\pi, M_{2}) \rvert    &\leq 0.2. \label{apx:eqn:examplebound}
\end{align}

Now to illustrate that the bound is tight, we compute the true difference in performances of a policy $\pi$ for the MDPs given in Figure \ref{fig:NSMDPbound}, i.e., the LHS of \eqref{apx:eqn:examplebound}.
Notice that
%
\begin{align}
    \rho(\pi, M_1) &= (1-\gamma)^{-1}\sum_{s \in \mathcal S}d^\pi(s, M_1) \sum_{a \in \mathcal A} \pi(a|s) R_1(s, a) \overset{\textbf{(a)}}{=} R_1(s_1, a) = 1,
\end{align}
where \textbf{(a)} follows because (i) $\gamma=0$, (ii)  as there is only a single action, $\pi(a|s) = 1$, and (iii) since $s_1$ is the starting state and $\gamma=0$, therefore, $d^\pi(s_1, M_1)=1$ and $d^\pi(s_2, M_1)=0$. 
Similarly, $\rho(\pi, M_2) = R_2(s_1, a) = 0.8$.
Therefore, $|\rho(\pi, M_1) - \rho(\pi, M_2)| = 0.2$, which is exactly equal to the value of bound in \eqref{apx:eqn:examplebound}.

\section{Uncertainty Estimation}
Our \thref{thm:consistentcoverage} makes use of a property proven by  \citet{djogbenou2019asymptotic}.
This property by \citet{djogbenou2019asymptotic} was established for inference about the \textit{parameters} of a regression model.
We leverage this property to obtain confidence intervals for \textit{predictions} of future performance.
In the following section, we first review their results and then in the section thereafter we present the proof of \thref{thm:consistentcoverage}.

\subsection{Preliminary}
\label{apx:sec:prelim}
Before moving forward, we first revisit all the necessary notations and review the result by \citet{djogbenou2019asymptotic}.
For a regression problem, let $Y \in \mathbb{R}^{k\times 1}$ be the stochastic observations, let $\Phi \in \mathbb{R}^{k \times d}$ be the deterministic predicates, and let $w \in \mathbb{R}^{d\times 1}$ be the regression parameters.
Let $\xi \in \mathbb{R}^{k \times 1}$ be a vector of $k$ independent noises.  
The linear system of equations for regression is then given by,
\begin{align}
    Y = \Phi w + \xi. \label{apx:eqn:regression}
\end{align}
The least-squares estimate $\hat w$ of $w$ is given by $\hat w \coloneqq (\Phi ^\top \Phi)^{-1}\Phi^\top Y$ and the estimate $\hat Y \coloneqq \Phi \hat w$.
Subsequently, co-variance of the estimate $\hat w$ can be computed as following,
\begin{align}
    V \coloneqq \mathbb{V}(\hat w) =& \mathbb{E} \left[ \left (\hat w - \mathbb{E}[\hat w] \right) \left(\hat w -\mathbb{E} [\hat w] \right)^\top \right]
    \\
    =& \mathbb{E} \left[\left((\Phi^\top \Phi)^{-1}\Phi^\top( Y - \mathbb{E}[Y])\right) \left((\Phi^\top \Phi)^{-1}\Phi^\top( Y - \mathbb{E}[Y]) \right)^\top \right]
    \\
    =&\mathbb{E} \left[ \left ((\Phi^\top \Phi)^{-1}\Phi^\top \xi\right) \left((\Phi^\top \Phi)^{-1}\Phi^\top\xi \right)^\top \right]
    \\
    =&\mathbb{E} \left[  (\Phi^\top \Phi)^{-1}\Phi^\top \xi \xi^\top \Phi (\Phi^\top \Phi)^{-1} \right] 
    \\
    \overset{\textbf{(a)}}{=}&(\Phi^\top \Phi)^{-1}\Phi^\top \mathbb{E} \left[  \xi \xi^\top \right] \Phi (\Phi^\top \Phi)^{-1} 
    \\
    =& (\Phi^\top\Phi)^{-1}\Phi^\top  \Omega \Phi (\Phi^\top\Phi)^{-1}, \label{apx:eqn:var}
\end{align}
where \textbf{(a)} follows from the fact that $\Phi$ is deterministic, and $\Omega$ is the co-variance matrix of the mean-zero and heteroscedastic noises $\xi$.
Notice that as the noises are independent, the off-diagonal terms in $\Omega$ are zero.
However, since the true $\Omega$ is not known, it can be estimated using $\hat \Omega$ which contains the squared errors from the OLS estimate \cite{mackinnon2012inference}.
That is, let $\hat \xi \coloneqq \hat Y - Y$, then $\hat \Omega$ is a diagonal matrix with $\hat \xi^2$ in the diagonal.
Let such an estimator of $\mathbb{V}(w)$ be,
\begin{align}
    \hat V \coloneqq (\Phi^\top\Phi)^{-1}\Phi^\top  \hat \Omega \Phi (\Phi^\top\Phi)^{-1}. \label{apx:eqn:varr}
\end{align}

Let $b^\top w$ be a desired null hypothesis with $b^\top b =1$.
Let $\texttt{t}_b$, the $\texttt{t}$-statistic for testing this hypothesis, and its pseudo-sample  $\texttt{t}_b^*$ obtained using the wild bootstrap procedure with Rademacher variables $\sigma^*$ be (see Section \ref{sec:CI} in the main body for exact steps),
\begin{align}
    \texttt{t}_b = \frac{b^\top(\hat w - w)}{\sqrt{b^\top \hat V b}}, 
    \quad\quad
    \texttt{t}_b^* \coloneqq \frac{b^\top(\hat w^* - \hat w)}{\sqrt{b^\top \hat V^* b}}.
    \label{apx:eqn:ta}
\end{align}
Note that in \eqref{apx:eqn:ta}, the subscript of $b$ is \textit{not} related to percentile of the previously defined $\texttt{t}$-statistic: $\texttt{t}$  in the main paper.
$\texttt{t}_b$ and $\texttt{t}_b^*$  are new variables.

Now we state the result we use from the work by \citet{djogbenou2019asymptotic}.
This result require two main assumptions.
Our presentations of these assumptions are slightly different from the exact statements given by \citet{djogbenou2019asymptotic}.
The differences are (a) we make the assumptions tighter than what is required for their results to hold, and (b) we ignore a third assumption that is related to cluster sizes, as our setting is a special case where the cluster size is equal to $1$. 
We call these assumptions \textit{requirements} to distinguish them from our assumptions.

\begin{reqm}[Independence]
\thlabel{apx:ass:independence}

$\forall i \in [1,k]$, the noise terms $\xi_i$'s are mean-zero, bounded, and independent random variables.

\end{reqm}

\begin{reqm}[Positive Definite]
\thlabel{apx:ass:pd}

$(\Phi^\top \Phi)^{-1}$ is positive-definite and $\exists C_2>0$ such that $\lVert \Phi \rVert_\infty < C_2$.

\end{reqm}


\begin{lemma}[Theorem 3.2 \citet{djogbenou2019asymptotic}]
\thlabel{apx:lemma:t}
Under \thref{apx:ass:independence,apx:ass:pd}, if $\mathbb{E}[ \sigma^{*3}] < \infty$ and if the true value of $w$ is given by \eqref{apx:eqn:regression}, then as $k \rightarrow \infty$,
\begin{align}
    \Pr\left (\sup_{x \in \mathbb{R}} \,\, \left \lvert \Pr(\texttt{t}_b^*<x) - \Pr(\texttt{t}_b<x) \right \lvert > \alpha \right) \rightarrow 0.
\end{align}
\end{lemma}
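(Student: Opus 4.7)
The plan is to prove the uniform closeness of the bootstrap distribution of $\texttt{t}_b^*$ to the sampling distribution of $\texttt{t}_b$ by showing that both CDFs converge (in the appropriate sense) to a common continuous limit, namely the standard normal CDF $\Phi$, and then using Polya's theorem to upgrade pointwise convergence to uniform convergence in $x$. This reduces the claim to two CLT statements plus two variance-consistency statements, and a final triangle-inequality step.

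First I would establish that $\texttt{t}_b \xrightarrow{d} N(0,1)$ as $k \to \infty$. Writing the numerator as $b^\top(\hat{w}-w) = b^\top(\Phi^\top\Phi)^{-1}\Phi^\top \xi$, this is a weighted sum of independent, bounded, mean-zero noises (by \thref{apx:ass:independence}), with weights controlled by $\Phi$ being bounded and $(\Phi^\top\Phi)^{-1}$ being positive definite (\thref{apx:ass:pd}). A Lindeberg-Feller CLT then gives asymptotic normality of the numerator once normalized by $\sqrt{b^\top V b}$. Combining this with consistency of the heteroscedasticity-robust variance estimator, $\hat{V} \xrightarrow{p} V$ (which follows because $\hat{\xi}_i = \xi_i - \Phi_i(\hat{w}-w)$ equals $\xi_i$ plus a term that vanishes in probability, so $\hat{\Omega}$ concentrates around $\Omega$), Slutsky's lemma yields $\texttt{t}_b \xrightarrow{d} N(0,1)$.

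Next I would establish the conditional analogue for the bootstrap statistic. Conditional on the observed data, $b^\top(\hat{w}^* - \hat{w}) = b^\top (\Phi^\top\Phi)^{-1}\Phi^\top (\hat{\xi} \odot \sigma^*)$ is a weighted sum of independent Rademacher variables, with (random) weights given by $\hat{\xi}_i$. I would verify a conditional Lindeberg (or Lyapunov) condition in probability: the finiteness of $\mathbb{E}[\sigma^{*3}]$ together with the boundedness of $\Phi$ and the concentration of $\hat{\Omega}$ around $\Omega$ lets us bound the conditional third absolute moment by a vanishing quantity relative to the conditional variance $b^\top \hat{V} b \xrightarrow{p} b^\top V b > 0$. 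This produces conditional asymptotic normality of the centered bootstrap numerator. A parallel argument for $\hat{V}^*$, using that $\hat{\xi}^{*}_i$ has the same structure as $\hat{\xi}_i$ but multiplied by $\sigma_i^*$ (whose square is identically one), shows $\hat{V}^* \xrightarrow{p} V$ conditional on the data. Slutsky then gives $\Pr(\texttt{t}_b^* \le x \mid \mathcal{D}) \xrightarrow{p} \Phi(x)$ for every fixed $x$.

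Finally I would combine these two pieces. Since $\Phi$ is continuous and monotone, Polya's theorem upgrades pointwise CDF convergence to uniform convergence, giving
\begin{align}
\sup_{x\in\mathbb{R}}\,\bigl|\Pr(\texttt{t}_b \le x) - \Phi(x)\bigr| \to 0, \qquad \sup_{x\in\mathbb{R}}\,\bigl|\Pr(\texttt{t}_b^* \le x \mid \mathcal{D}) - \Phi(x)\bigr| \xrightarrow{p} 0.
\end{align}
A triangle inequality then yields $\sup_x |\Pr(\texttt{t}_b^* \le x \mid \mathcal{D}) - \Pr(\texttt{t}_b \le x)| \xrightarrow{p} 0$, which is exactly the claim since convergence in probability of a nonnegative random variable to zero is equivalent to $\Pr(\cdot > \alpha)\to 0$ for every $\alpha > 0$.

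The main obstacle I expect is the conditional CLT step: one must carefully separate the randomness of $\hat{\xi}$ (inherited from the data) from the randomness of $\sigma^*$, verify the Lindeberg condition \emph{in probability} over the data, and propagate the difference $\hat{\xi}_i - \xi_i$ through both the bootstrap numerator and the bootstrap variance estimator without losing the $\xrightarrow{p} V$ guarantee. Handling heteroscedasticity in $\Omega$ precludes simpler residual-bootstrap arguments and forces this more delicate accounting, but the Rademacher multipliers keep the conditional second moment exactly equal to $\hat{\Omega}$, which is the structural fact that makes the wild bootstrap succeed here.
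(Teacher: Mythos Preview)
The paper does not prove this lemma: it is stated as Theorem~3.2 of \citet{djogbenou2019asymptotic} and invoked as a black box in the proof of \thref{thm:consistentcoverage}. There is therefore no ``paper's own proof'' to compare against.

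Your sketch is the standard route to wild-bootstrap consistency and is, at the level of a plan, correct: (i) a Lindeberg--Feller CLT for $b^\top(\hat w - w)$ together with consistency of the heteroscedasticity-robust $\hat V$ gives $\texttt{t}_b \xrightarrow{d} N(0,1)$; (ii) a conditional CLT in probability for the Rademacher-weighted sum, using $\mathbb{E}[\sigma^{*2}]=1$ so the conditional variance is exactly $b^\top \hat V b$, plus conditional consistency of $\hat V^*$, gives $\Pr(\texttt{t}_b^* \le x \mid \mathcal D)\xrightarrow{p}\Phi(x)$; (iii) Polya plus a triangle inequality yields the uniform statement. The delicate part, as you flag, is making the conditional Lindeberg condition rigorous in probability and controlling $\hat\xi_i-\xi_i$ uniformly; \citet{djogbenou2019asymptotic} do exactly this (in a more general clustered setting), so if you want a full proof rather than a sketch you should consult their argument directly.
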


\subsection{Proof of Coverage Error}

\label{apx:sec:coverageproof}

First, we recall the notations established in the main body, which are required for the proof.
Using similar steps to those in \eqref{apx:eqn:var}, it can be seen that the variance $V_f$ of the estimator $\hat \rho(\pi, k+\delta) \coloneqq \phi(k+\delta) \hat w$ of future performance is
\begin{align}
    V_f = \phi(k+\delta)(\Phi^\top\Phi)^{-1}\Phi^\top  \Omega \Phi (\Phi^\top\Phi)^{-1}\phi(k+\delta)^\top. \label{apx:eqn:vf}
\end{align}
Similar to before, let an estimate $\hat V_f$ of $V_f$ be defined as,
\begin{align}
    \hat V_f = \phi(k+\delta)(\Phi^\top\Phi)^{-1}\Phi^\top  \hat \Omega \Phi (\Phi^\top\Phi)^{-1}\phi(k+\delta)^\top, \label{apx:eqn:vfhat}
\end{align}
where $\hat \Omega$ is the same as in \eqref{apx:eqn:varr}.
Recall from Section \ref{sec:CI} that the sample standard deviation of the $\phi(k+\delta)\hat w$ is $\hat s = \sqrt{\hat V_f} $ and the pseudo standard deviation $\hat s^* \coloneqq \sqrt{\hat V_f^*}$, where the pseudo variables are created using wild bootstrap procedure outlined in Section \ref{sec:CI}.
Similarly, recall that the $\texttt{t}$-statistic and the pseudo $\texttt{t}$-statistic for estimating future performance are  given by
\begin{align}
     \texttt{t} \coloneqq \frac{\hat \rho(\pi, k+\delta) - \rho(\pi, k+\delta)}{\hat s},  \quad\quad 
     \texttt{t}^* \coloneqq \frac{\hat \rho(\pi, k+\delta)^* - \hat \rho(\pi, k+\delta)}{\hat s^*}. \label{apx:eqn:t}
\end{align}

For the purpose of  \thref{thm:consistentcoverage}, we use a Fourier basis of order $d$, which is given by \cite{bloomfield2004fourier}: 
\begin{align}
    \phi(x) \coloneqq \left \{\frac{\sin(2 \pi n x)}{C} \middle | n \in [1,d] \right \} \cup \left \{\frac{\cos(2 \pi n x) }{C} \middle | n \in [1,d] \right \} \cup \left \{\frac{1}{C} \right \}, \label{apx:eqn:fourier}
\end{align}
where $C\coloneqq \sqrt{d + 1}$.

\begin{thm}[Consistent Coverage] Under \thref{ass:smooth,ass:fullsupport}, if the set of trajectories $\{H_i\}_{i=1}^k$ are independent and if $\phi(x)$ is a Fourier basis of order $d$, then as $k \rightarrow \infty$,
\begin{align}
\Pr\left (\rho(\pi, k+\delta) \in  \left [\hat \rho (\pi, k+\delta) - \hat s \texttt{t}^*_{1 - \alpha/2}, \,\, \hat \rho (\pi, k+\delta) - \hat s \texttt{t}^*_{\alpha/2} \right] \right) \rightarrow 1 - \alpha. \label{apx:eqn:thm}
\end{align}

\end{thm}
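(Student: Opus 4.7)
The plan is to reduce the problem of future-performance inference to the single-parameter regression hypothesis that \thref{apx:lemma:t} already handles, and then translate the resulting distributional convergence into a coverage statement. Concretely, I would set $b = \phi(k+\delta)^\top / \lVert \phi(k+\delta) \rVert$, so that $b^\top b = 1$ as required. By linearity, $b^\top(\hat w - w) = \lVert \phi(k+\delta) \rVert^{-1}(\hat \rho(\pi, k+\delta) - \rho(\pi, k+\delta))$, and from \eqref{apx:eqn:vfhat}, $b^\top \hat V b = \hat V_f / \lVert \phi(k+\delta) \rVert^2 = \hat s^2 / \lVert \phi(k+\delta) \rVert^2$. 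The $\lVert \phi(k+\delta) \rVert$ factors cancel, so the lemma's $\texttt{t}_b$ coincides with the $\texttt{t}$ of \eqref{apx:eqn:t}, and analogously $\texttt{t}_b^* = \texttt{t}^*$. This identifies our prediction pivot as a specialization of the lemma's parameter-inference pivot.

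Next I would verify the lemma's two requirements. The regression noise here is $\xi_i = \hat \rho(\pi, i) - \phi(i)w$. Under \thref{ass:fullsupport}, per-decision importance sampling is conditionally unbiased, so $\mathbb{E}[\hat \rho(\pi, i) \mid M_i] = \rho(\pi, M_i)$; taking expectation over $M_i$ and invoking \thref{ass:smooth} yields $\mathbb{E}[\hat \rho(\pi, i)] = \phi(i)w$, giving the mean-zero property. Boundedness follows from the $T$-step episode termination, rewards in $[-R_\text{max}, R_\text{max}]$, and importance weights bounded by $(1/C)^T$ under \thref{ass:fullsupport}. Independence decomposes $\xi_i$ into MDP-sampling noise (independent across episodes by the independence clause in \thref{ass:smooth}) and trajectory-sampling noise (independent across episodes by the hypothesis in the theorem statement). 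For \thref{apx:ass:pd}, every entry of the Fourier basis in \eqref{apx:eqn:fourier} has magnitude at most $1/\sqrt{d+1}$, and positive-definiteness of $\Phi^\top \Phi$ for a Fourier basis on a growing integer grid of length $k \geq 2d+1$ follows from standard linear-independence arguments for the $2d+1$ sinusoidal and constant columns.

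Having verified both requirements, \thref{apx:lemma:t} yields $\sup_x \lvert \Pr(\texttt{t}^* < x) - \Pr(\texttt{t} < x) \rvert \to 0$. Standard quantile-continuity arguments then imply that the pseudo-quantiles $\texttt{t}^*_{\alpha/2}$ and $\texttt{t}^*_{1-\alpha/2}$ converge to the true quantiles of $\texttt{t}$ in probability, so $\Pr(\texttt{t}^*_{\alpha/2} \leq \texttt{t} \leq \texttt{t}^*_{1-\alpha/2}) \to 1 - \alpha$. Substituting $\texttt{t} = (\hat \rho(\pi, k+\delta) - \rho(\pi, k+\delta))/\hat s$ from \eqref{apx:eqn:t} and rearranging the inequalities converts this statement directly into \eqref{apx:eqn:thm}.

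The main obstacle will be verifying the independence requirement cleanly: the noise $\xi_i$ blends MDP-sampling and trajectory-sampling randomness, and even though each source is independent across episodes by a separate hypothesis, stitching them together demands a careful tower-style argument. A secondary subtlety is the positive-definiteness of $\Phi^\top \Phi$, since a naively scaled Fourier basis can be degenerate on an integer grid (sines vanish at integer arguments); the specific normalization in \eqref{apx:eqn:fourier} together with a large enough $k$ is what rescues the argument.
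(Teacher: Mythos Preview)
Your proposal is correct and follows essentially the same route as the paper's proof: verify \thref{apx:ass:independence,apx:ass:pd}, identify the prediction pivot with the lemma's $\texttt{t}_b$ by taking $b$ along $\phi(k+\delta)^\top$, invoke \thref{apx:lemma:t}, and rearrange into the coverage statement. The only cosmetic difference is that the paper uses the trigonometric identity $\sin^2+\cos^2=1$ to show $\phi(k+\delta)\phi(k+\delta)^\top=1$ directly, so $b=\phi(k+\delta)^\top$ is already unit-norm and no explicit rescaling is needed.
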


\begin{proof}
For the purpose of this proof, we will make use \thref{apx:lemma:t}.
Therefore, we first discuss how our method satisfies the requirements for \thref{apx:lemma:t}.

To satisfy \thref{apx:ass:independence}, recall that in the proposed method, the estimates $\{\hat \rho(\pi, i)\}_{i=1}^k$ of past performances are obtained using counter-factual reasoning.
Therefore, satisfying \thref{apx:ass:independence} in our method requires consideration of two sources of noise:
(a) the noise resulting from the inherent stochasticity in the non-stationary MDP sequence, as given in \thref{ass:smooth}, and
(b) the other noise resulting due to our use of importance sampling to estimate past performances $\{\rho(\pi, i)\}_{i=1}^k$, which are subsequently used to obtain the forecast for $\rho(\pi, k+\delta)$.


Notice that the noises $\{\xi_i\}_{i=1}^k$ inherent to the non-stationary MDP are both mean-zero and independent because of \thref{ass:smooth}.
Further, as importance sampling is unbiased and uses independent draws of trajectories $\{H_i\}_{i=1}^k$, the additional noises in the estimates $\{\hat \rho(\pi, i)\}_{i=1}^k$ are also mean-zero and independent.
The boundedness condition of each $\xi_i$ also holds as (a) all episodic returns are bounded, which is because every reward is bounded between $[-R_\text{max}, R_\text{max}]$ and $\gamma <1$, and
(b) following  \thref{ass:fullsupport}, the denominator of importance sampling ratios are lower bounded by $C$.
Therefore, importance weighted returns are upper bounded by a finite constant.
This makes the noise from importance sampling estimates also bounded.
Hence, all the noises in our performance estimates are independent, bounded, and mean zero.

To satisfy \thref{apx:ass:pd}, 
note that as $\Phi^\top \Phi$ is an inner product matrix, it has to be positive semi-definite. 
Further, as the Fourier basis creates linearly independent features, when  $k>d$ (i.e., it has more samples than number of parameters) the matrix will have full column-rank.
Combining these two points it can be seen that $\Phi^\top\Phi$ is a positive-definite matrix and as the eigenvalues of $(\Phi^\top\Phi)^{-1}$ are just the reciprocals of the eigenvalues of $\Phi^\top\Phi$, the matrix $(\Phi^\top\Phi)^{-1}$ is also positive-definite.
Second half of \thref{apx:ass:pd} is trivially satisfied as all the values of $\phi(x)$ are in $[-1/C, 1/C]$.

Finally, note that when $\phi: \mathbb{N} \rightarrow \mathbb{R}^{1 \times d}$ is a Fourier basis  then $\forall x \in \mathbb{R}, \,\,  \phi(x) \phi(x)^\top = 1$.
To see why, notice from \eqref{apx:eqn:fourier} that 
\begin{align}
    \phi(x) \phi(x)^\top =& \sum_{n=1}^d \left(\frac{\sin(2 \pi n x)}{C}\right)^2 + \sum_{n=1}^d \left(\frac{\cos(2 \pi n x)}{C}\right)^2 + 
     \left(\frac{1}{C}\right)^2
     \\
    =& \frac{ \sum_{n=1}^d \left( \sin^2(2 \pi n x) + \cos^2(2 \pi n x)\right) + 1 }{C^2}
    \overset{\textbf{(a)}}{=} \frac{d+1}{C^2} = 1, \label{apx:eqn:f1}
\end{align}
where \textbf{(a)} follows from the trignometric inequality that $\forall x \in \mathbb R \,\, \sin^2(x) + \cos^2(x) = 1$.

Now we are ready for the complete proof.
For brevity, we define $\mathcal C \coloneqq \left [\hat \rho (\pi, k+\delta) - \hat s \texttt{t}^*_{1 - \alpha/2}, \,\, \hat \rho (\pi, k+\delta) - \hat s \texttt{t}^*_{\alpha/2} \right]$, $\rho \coloneqq \rho(\pi, k+\delta)$, and $\hat \rho \coloneqq  \hat \rho(\pi, k+\delta)$, and expand the LHS of \eqref{apx:eqn:thm},

\begin{align}
    \Pr \left (\rho \in \mathcal C \right) =& \Pr \left (\hat \rho - \hat s \texttt{t}^*_{1 - \alpha/2} \leq \rho \leq \hat \rho - \hat s \texttt{t}^*_{\alpha/2} \right)
    \\
    =& \Pr \left (- \hat s \texttt{t}^*_{1 - \alpha/2} \leq \rho - \hat \rho \leq - \hat s \texttt{t}^*_{\alpha/2} \right)
    \\
    =& \Pr \left (\hat s \texttt{t}^*_{1 - \alpha/2} \geq \hat \rho - \rho \geq \hat s \texttt{t}^*_{\alpha/2} \right)
    \\
    =& \Pr \left (\texttt{t}^*_{1 - \alpha/2} \geq \frac{\hat \rho - \rho}{\hat s} \geq \texttt{t}^*_{\alpha/2} \right). 
    \\
    =& \Pr \left (\texttt{t}^*_{1 - \alpha/2} \geq \texttt{t} \geq \texttt{t}^*_{\alpha/2} \right)
    \\
    =& \Pr \left (\texttt{t} \leq \texttt{t}^*_{1 - \alpha/2} \right) - \Pr \left( \texttt{t} \leq \texttt{t}^*_{\alpha/2} \right). \label{apx:eqn:tstar}
\end{align}
To simplify \eqref{apx:eqn:tstar}, let $b = \phi(k+\delta)^\top$.
Under this instantiation of $b$, the null hypothesis $b^\top w$ in \ref{apx:sec:prelim} for our setting corresponds to $\phi(k+\delta) w$,  which is the true future performance under \thref{ass:smooth}.
Further, for this instantiation of $b$, note from \eqref{apx:eqn:f1} that $b^\top b = 1$.
Now, it can be seen from  \eqref{apx:eqn:vfhat} that $\hat V_f = b^\top \hat V b$, and $\hat V_f^* = b^\top \hat V^* b$.
Thus, $\texttt{t} = \texttt{t}_b$ and $\texttt{t}^* = \texttt{t}_b^*$.
Finally, note that as $\sigma^*$ corresponds to the Rademacher random variable, $\mathbb{E}[\sigma^{*3}] = 0$.

Therefore, leveraging \thref{apx:lemma:t}, in the limit, for any $x$, we can substitute $\Pr(\texttt{t}<x)$ with $\Pr(\texttt{t}^*<x)$ in \eqref{apx:eqn:tstar},
This substitution yields.
\begin{align}
    \Pr \left (\rho \in \mathcal C \right) &\rightarrow \Pr \left (\texttt{t}^* \leq \texttt{t}^*_{1 - \alpha/2}\right) - \Pr \left( \texttt{t}^* \leq \texttt{t}^*_{\alpha/2} \right)
    \\
    &= (1 - \alpha/2) - (\alpha/2)
    \\
    %
    %
    &= 1  -\alpha. \qedhere
\end{align}
\end{proof}

Notice that using the Fourier basis, we were able to satisfy the condition that $b^\top b = 1$ directly.
This allowed us to leverage \thref{apx:lemma:t} without much modification.
However, as noted by \citet{djogbenou2019asymptotic}, the constraint on $b^\top b$ is not necessary and was used to simplify the proof.

\section{Extended Discussion on Bootstrap}
\label{apx:sec:bootstrap}

The goal of this section is to provide additional discussion on (wild) bootstrap for completeness.
Therefore, this section contains a summary of existing works and has no original technical contribution.
We begin by first discussing the idea behind any general bootstrap and the wild bootstrap method.
Subsequently, we discuss alternatives to wild bootstrap.

In many practical applications, it is often desirable to infer distributional properties (e.g., CIs) of a desired statistic of data (e.g., mean).
However, in practice, it is often not possible to get multiple estimates of the desired statistic in a data-efficient way.
To address this problem, bootstrap methods have received wide popularity in the field of computational statistics \cite{efron1994introduction}.

 The core principle of any bootstrap procedure is to \textit{re-sample} the observed data-set $\mathcal D$ and construct multiple \textit{pseudo data-sets} $\mathcal D^*$ in a way that closely mimics the original \textit{data generating process} (DGP).
 This allows to create an \textit{empirical distribution} of the desired statistic by leveraging multiple pseudo data-sets $\mathcal D^*$ \citep{efron1994introduction}.
For example, an empirical distribution containing $B$ estimates of the sample mean can be obtained by generating $B$ pseudo data-sets, where each data-set contains $N$ samples uniformly drawn (with replacement) from the original data-set of size $N$.

For an excellent introduction to bootstrap CIs, refer to the works by \citet{efron1994introduction} and \citet{diciccio1996bootstrap}.
The book by \citet{hall2013bootstrap} provides a thorough treatment of these methods using \textit{Edgeworth expansion}, illustrating when and how bootstrap methods can provide significant advantage over other methods. 
For a very readable practitioner's guide that touches upon several important aspects, refer to the work by \citet{carpenter2000bootstrap}.

\subsection{Why does wild bootstrap work?} 
\label{apx:sec:whyboot}

The original idea of wild bootstrap was proposed by \citet{wu1986jackknife} and later developed by \citet{liu1988bootstrap}, \citet{mammen1993bootstrap}, and \citet{davidson1999wild,davidson2008wild}.
The following summary about the wild bootstrap process is based on an excellent tutorial by \citet{mackinnon2012inference}.

Consider the system of equations in \eqref{apx:eqn:regression}.
The key idea of wild-bootstrap is that the uncertainty in regression estimates (of parameters/predictions) is due to the noise $\xi$ in the observations.
Therefore, if the pseudo-data $Y^*$ is generated such that the noise $\xi^*$ in the data generating process for $Y^*$ resembles the properties of the true underlying noise $\xi$,
then with multiple redraws of such $Y^*$ one can obtain an empirical distribution of the desired statistic (which for our case, corresponds to the forecast of a policy $\pi$'s performance).
This can then be used to estimate the CIs.


As true noise $\xi$ is unobserved, it raises a question about how to estimate its properties to generate $Y^*$.
Fortunately, as ordinary least-squares is an unbiased estimator of parameters/predictions \cite{wasserman2013all}, regression errors $\hat \xi$ can be used as a substitute for the true noise.    
Therefore, to mimic the underlying data generating process, it would be ideal to have bootstrap error terms $\xi^*$ that have similar moments as $\hat \xi$.
%
%
Following the work by \citet{davidson1999wild}, we set $Y^* \coloneqq \hat Y + \xi^*$, where $\xi^* \coloneqq \hat \xi \odot \sigma^*$, and $\sigma^* \in \mathbb{R}^{k \times 1}$ is the independent Rademacher random variable (i.e., $\forall i \in [1,k], \,\, \Pr(\sigma_i^*=+1) = \Pr(\sigma_i^*=-1) = 0.5$).
This choice of $\sigma_i^*$, for all $i \in [1,k]$, ensures that $\xi_i^*$ has the desired zero mean and the same higher-order \textit{even} moments as $\hat \xi_i$ because,
\begin{align}
    \forall i, \,\, \mathbb{E}[\sigma_i^*]=0, \,\, \mathbb{E}[{\sigma_i^*}^2]=1, \,\, \mathbb{E}[{\sigma_i^*}^3]=0, \,\, \mathbb{E}[{\sigma_i^*}^4]=1.
    \label{eqn:Rademacher}
\end{align}

Therefore, for the purpose of this paper, pseudo performances $Y^*$ generated using pseudo-noise $\xi^*$ allow generating a distribution of $\hat \rho(\pi, k+\delta)^*$ that closely mimics the distribution of forecasts $\hat \rho(\pi, k+\delta)$ that would have been generated if we had the true underlying data generating process.


 \subsection{Why not use other bootstrap methods?} 
 \label{apx:sec:otherboot}
 One popular non-parametric technique for bootstrapping in regression is to re-sample, with replacement, $(x,y)$ pairs from the set of observed samples $(X,Y)$ \cite{carpenter2000bootstrap}.
However, in our setup, $X$ variable corresponds to the (deterministic) time index and thus there exists no joint distribution between the $X$ and the $Y$ variables from where time can be sampled stochastically. 
Therefore, paired re-sampling will not mimic the underlying data generative process in our setting.

A semi-parametric technique overcomes the above problem by only re-sampling $Y$ variable as follows.
First, a model is fit to the observed data $(X,Y)$ and predictions $\hat Y$ are obtained.
Then an empirical cumulative distribution function, $\Psi(e)$ of all the errors, $e \coloneqq Y - \hat Y$, is obtained. 
Subsequently, new bootstrapped variables are created as $Y^* \coloneqq \hat Y + \xi^*$, where $\xi^*$ is the re-sampled noise from $\Psi(e)$ \cite{efron1994introduction}.
However, such a process assumes that noises are homoscedastic, which will be severely violated for our purpose.

Another popular technique for \textit{auto-correlated} data uses the idea of \textit{block re-sampling} \cite{efron1994introduction}.
However, this assumes that the underlying process is stationary, and hence is not suitable for our purpose.

\subsection{Why not use standard \texttt{t}-test?}
\label{apx:sec:ttest}
Standard \texttt{t}-test assumes that the predictions will follow the student-\texttt{t} distribution.
Such an assumption can be severely violated, specially in the presence of heteroscedasticity, and heavy tailed noises, when the sample size is not sufficiently large.
Unfortunately, in our setting, use of multiple behavior policies result in heteroscedasticity and importance sampling results in heavy tailed distribution \cite{thomas2015higheval} for counterfactual estimates of past performances. 
It can also be shown that for a finite sample of size $n$, the coverage error of CIs obtained using standard \texttt{t}-statistic is of order $O(n^{-1/2})$ \cite{wasserman2013all,hall2013bootstrap}.
In comparison, it can be shown using Edgeworth expansions \cite{hall2013bootstrap} that the coverage error rate of CIs obtained using bootstrap methods typically provide higher-order refinement by providing error rates up to $O(n^{-p/2})$, where $p \in [1,3]$ \cite{hall1989unusual, diciccio1996bootstrap, hall2013bootstrap}.
%
%
For more elaborate discussions in the context of wild bootstrap, see the work by \citet{kline2012higher} and by \citet{djogbenou2019asymptotic}.
Also, see the work by \citet{mammen1993bootstrap} for detailed empirical comparison of standard t-test against wild-bootstrap.

\section{Algorithm}
\label{sec:apx:algorithm}
In Algorithms \ref{Alg:2}-\ref{Alg:1},\footnote{When $(\alpha/2) B$  or $(1-\alpha/2)B$ is not an integer, then $\texttt{floor}$ or $\texttt{ceil}$ operation should be used, respectively.} we provide the steps for our method: SPIN.
In Algorithm \ref{Alg:3}, PDIS is shorthand for per-decision importance sampling discussed in Section \ref{sec:CI}.
In the following, we discuss certain aspects of SPIN, especially pertaining to the search of a candidate policy $\pi_c$.

\begin{minipage}[t]{0.5\textwidth}
	\IncMargin{1em}
	\begin{algorithm2e}[H]
		\textbf{Input} Predicates $\Phi$, Targets $Y$, Forecast time(s) $\tau$
		\\
        $H \leftarrow (\Phi^\top \Phi)^{-1}\Phi^\top$
        \\
        $\varphi \leftarrow [\phi(\tau_1),...,\phi(\tau_\delta)]$
        \\
        $\hat Y \leftarrow \Phi HY$
        \\
        $\hat \rho \leftarrow \texttt{mean}( \varphi HY)$
        \\
        $\hat \xi \leftarrow  Y - \hat Y$
        \\
        $\hat \Omega \leftarrow \texttt{diag}(\hat \xi^2)$
        \\
        $\hat V \leftarrow \texttt{mean}(\varphi H \hat \Omega H^\top \varphi^\top)$
        \\
        Return $\hat \rho, \hat V, \hat \xi$
		\caption{Forecast}
		\label{Alg:2}  
	\end{algorithm2e}
	\DecMargin{1em} 
	\IncMargin{1em}
	\begin{algorithm2e}[H]
		\textbf{Input} Data $\mathcal D$, Policy $\pi$, Safety-violation rate $\alpha$, Forecast time(s) $\tau$
		\\
        $\Phi \leftarrow \emptyset, Y \leftarrow \emptyset$
        \\
        
		\vspace{8pt}
		\nonl \textcolor[rgb]{0.5,0.5,0.5}{\# Create regression variables}
		\\
        \For{$(k, h) \in \mathcal D$}
        {
            $ \hat \rho(\pi, k) \leftarrow \text{PDIS}(\pi, h)$
            \\
            $\Phi.\texttt{append}(\phi(k))$
            \\
            $Y.\texttt{append}(\hat \rho(\pi, k))$
        }

		\vspace{8pt}
		$\hat \rho, \hat V, \hat \xi \leftarrow \text{Forecast}(\Phi, Y, \tau)$

		\vspace{8pt}
		\nonl \textcolor[rgb]{0.5,0.5,0.5}{\# Wild Bootstrap (in parallel)}
		\\
		$\texttt{t}^* \leftarrow \emptyset, \texttt{t}^{**} \leftarrow \emptyset$
		\\
		\For{$i \in [1,...,B]$}
		{
		    $\sigma^* \leftarrow [\pm 1, \pm 1, ..., \pm 1]$
		    \\
        	    $\xi^* \leftarrow \hat \xi \odot \sigma^*$
    	    \\
    	    $ Y^* \leftarrow \hat Y + \xi^*$
    	    \\
    		$\hat \rho^*, \hat V^*, \_ \leftarrow \text{Forecast}(\Phi, Y^*, \tau) $
    		
    	    $\texttt{t}^*[i] \leftarrow  (\hat \rho^* - \hat \rho)/ \sqrt{\hat V^*} $
    	}

		\vspace{8pt}
		\nonl \textcolor[rgb]{0.5,0.5,0.5}{\# Get prediction interval}
		\\
		$\texttt{t}^{**} \leftarrow \texttt{sort}(\texttt{t}^*)$
		\\
		$\hat \rho^{\text{lb}} \leftarrow \hat \rho -  \texttt{t}^{**}[ (1-\alpha/2) B] \sqrt{\hat V}$
		\\
		$\hat \rho^{\text{ub}} \leftarrow \hat \rho - \texttt{t}^{**}[(\alpha/2) B] \sqrt{\hat V}$
        \\
        
        \vspace{8pt}
        Return ($\hat \rho^{\text{lb}}, \hat \rho^{\text{ub}}$)
		\caption{PI: Prediction Interval}
		\label{Alg:3}  
	\end{algorithm2e}
	\DecMargin{1em} 
\end{minipage}
\begin{minipage}[t]{0.5\textwidth}
	\IncMargin{1em}
	\begin{algorithm2e}[H]
		\textbf{Input} Safety-violation rate $\alpha$, Initial safe policy $\pi^\text{safe}$, Entropy-regularizer  $\lambda$, Batch-size $\delta$\\
		\textbf{Initialize} $\mathcal{D}_\text{train} \leftarrow \emptyset$,
		$\mathcal{D}_\text{test}\leftarrow \emptyset$, \,\,
		$\pi \leftarrow \pi_1^\text{safe}$, \,\, $k \leftarrow 0$.
		\\
		\While{True}
		{
			\nonl \textcolor[rgb]{0.5,0.5,0.5}{\# Collect new trajectories using $\pi$}
			\\
			$ \mathcal D \leftarrow \emptyset $
			\\
			\For {$\texttt{episode} \in [1,2,..., \delta]$}
			{
			    $k \leftarrow k +1$
    			\\
        		$h \leftarrow \{(s^t, a^t, \Pr(a^t|s^t), r^t) \}_{t=0}^T$
        		\\
        		$ \mathcal D \leftarrow  \mathcal D \cup (k, h)$
    		}
    		\vspace{8pt}
    		\nonl
    		\nonl \textcolor[rgb]{0.5,0.5,0.5}{\# Split data}
    		\\
    		$\mathcal D_1, \mathcal D_2 \leftarrow \texttt{split}(\mathcal D)$
    		\\
    		$\mathcal D_{\text{train}} \leftarrow \mathcal D_1 \cup \mathcal D_{\text{train}} $
    		\\
    		$\mathcal D_{\text{test}} \leftarrow \mathcal D_2 \cup D_{\text{test}}$
    		
    		\vspace{8pt}
    		\nonl
    		\textcolor[rgb]{0.5,0.5,0.5}{\# Candidate search} 
    		\\
    		$\tau \leftarrow [k+1,...,k+\delta]$
    		\\
			$\hat \rho^\text{lb}(\pi), \_ \leftarrow \text{PI}(\mathcal D_{\text{train}}, \pi, \alpha/2, \tau)$
    		\\
    		$\pi_c \leftarrow \text{argmax}_{\pi}  [ \hat \rho^\text{lb}(\pi) + \lambda \mathcal H(\pi, \mathcal D_{\text{train}})] $
    		\\
    		\vspace{8pt}
    		\nonl 
    		\textcolor[rgb]{0.5,0.5,0.5}{\# Safety test}\\
    		
    		$\hat \rho^\text{lb}, \_ \leftarrow  \text{PI}(\mathcal D_\text{test}, \pi_c,\alpha/2, \tau)$
    		\\
    		$\_, \hat \rho^\text{ub} \leftarrow  \text{PI}(\mathcal D_\text{test}, \pi^\text{safe}, \alpha/2, \tau)$\\
    		\vspace{5 pt}
    		\If{$\hat \rho^\text{lb} > \hat \rho^\text{ub}$}
    		{
        		$\pi \leftarrow \pi_c$
    		}
    		\Else
    		{
        		$\pi \leftarrow \pi^\text{safe}$
    		}
        }
		\caption{SPIN: Safe Policy Improvement for Non-stationary settings}
		\label{Alg:1}  
	\end{algorithm2e}
	\DecMargin{1em} 
\end{minipage}

\textbf{Mean future performance: }
In many practical applications, it is often desirable to reduce computational costs by executing a given policy $\pi$ for multiple episodes before an update, i.e., $\delta > 1$.
This raises the question regarding which episode, among the future $\delta$ episodes, should a policy $\pi$ be optimized for before execution?
To address this question, in settings where $\delta>1$, instead of choosing a single future episode's performance for optimization and safety check, we propose using the average performance across all the $\delta$ future episodes, i.e., $(1/\delta)\sum_{i=1}^\delta\rho(\pi, k+i)$.

\paragraph{Differentiating the lower bound:}
SPIN proposes a candidate policy $\pi_c$ by finding a policy $\pi$ that maximizes the lower bound $\hat \rho^\text{lb}$ of the future performance (Line 14 in Algorithm \ref{Alg:1}).
To find $\pi_c$ efficiently, we propose using a differentiable optimization procedure.
A visual illustration of the process is given in Figure \ref{fig:diffLB}.

Derivatives of most of the steps in Algorithms \ref{Alg:2}-\ref{Alg:3} can directly be taken care by modern automatic differentiable programming libraries.
Hence, in the following, we restrict the focus of our discussion for describing a \textit{straight-through} gradient estimator for sorting  performed in Line 15 in Algorithm \ref{Alg:3}.
Note that sorting is required to obtain the \textit{ordered-statistics} to create an empirical distribution of $\texttt{t}^*$ such that in Line 16 and 17 of Algorithm \ref{Alg:3} the desired percentiles of $\texttt{t}^*$ can be obtained. 

We first introduce some notations.
Let $\texttt{t}^* \in \mathbb{R}^{B \times 1}$ be the unsorted array and $\texttt{t}^{**} \in \mathbb{R}^{B \times 1}$ be its sorted counterpart.
To avoid breaking ties when sorting, we assume that there exists $C_3 > 0$ such that all the values of $\texttt{t}^*$ are separated by at least $C_3$.
Let $\Gamma \in \{0,1\}^{B \times B}$ be a \textit{permutation matrix} (i.e., $\forall (i,j), \,\, \Gamma(i,j) \in \{0,1\}$, and each row and each column of $\Gamma$ sums to $1$)
obtained using any sorting function such that $\texttt{t}^{**} = \Gamma \texttt{t}^*$.
%
%
This operation has a computational graph as shown in Figure \ref{fig:diffSort}.

\begin{wrapfigure}{R}{0.35\textwidth}
    \centering
    \includegraphics[width=0.35\textwidth]{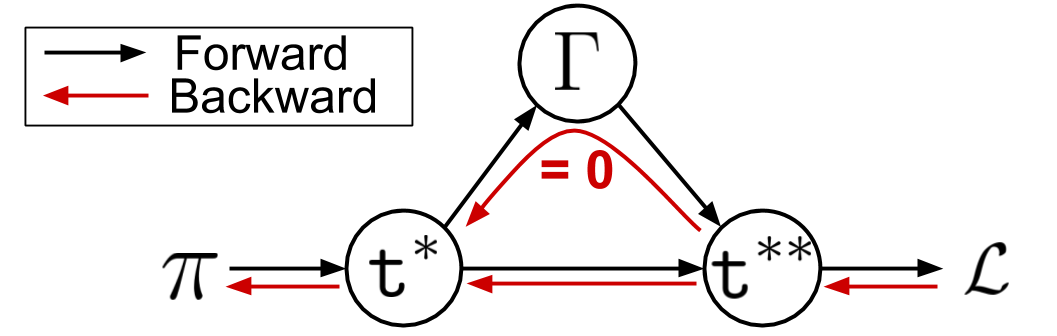}
    \caption{Computational graph for obtaining ordered-statistics $\texttt{t}^{**}$.}
    \label{fig:diffSort}
\end{wrapfigure}

Notice that when the values to be sorted are perturbed by a very small amount, the order of the sorted array remains the same (e.g., sorting both the array $[30,10,20]$ and its perturbed version results in $[10,20,30]$).
%
%
%
That is, if $\texttt{t}^*$  is perturbed by an $\epsilon \rightarrow 0$, then the  $\Gamma$ obtained using the sorting function will not change at all. 
Therefore, the derivative of $\Gamma$ with respect to $\texttt{t}^*$ is $0$ and derivative of a desired loss function $\mathcal L$ with respect to $\texttt{t}^*$ is
\[\frac{\partial \mathcal L}{\partial \texttt{t}^*} = \Gamma^{\top}\frac{\partial \mathcal L}{\partial \texttt{t}^{**}} = \Gamma^{-1}\frac{\partial \mathcal L}{\partial \texttt{t}^{**}},\]
as for any permutation matrix, $\Gamma^\top = \Gamma^{-1}$. 
Therefore, derivatives are back-propagated through the sorting operation in a straight-through manner by directly performing \textit{un}-sorting.

More advanced techniques for differentiable sorting has been proposed by \citet{cuturi2019differentiable} and \citet{blondel2020fast}.
These methods can be leveraged to further improve our algorithm.
We leave these for future work.

\paragraph{Entropy regularization: }

As we perform iterative safe policy improvement, the current policy $\pi$ becomes the behavior policy $\beta$ for future updates.
Therefore, if the current policy $\pi$ becomes nearly deterministic then the\textit{ past performance estimates for a future policy}, which is computed using importance sampling, can suffer from high-variance.
To mitigate this issue, we add a $\lambda$ regularized entropy bonus $\mathcal H$ in the optimization objective.
This is only done during candidate policy search and hence does not impact the safety check procedure.

\paragraph{Percentile CIs: }
Notice that each step of the inner optimization process to search for a candidate policy $\pi_c$ requires computing multiple estimates of the pseudo standard deviation $\hat s^*$, one for each sample of $\texttt{t}^*$, using wild-bootstrap to obtain the CIs.
This can be computationally expensive for real-world applications that run on low-powered devices.
As an alternative, we propose using the \textit{percentile} method \cite{carpenter2000bootstrap,efron1994introduction} during the candidate policy search, which unlike the $\texttt{t}$-statistic method does not require computing $\hat s^*$. 

While percentile method can offer significant computational speed-up, the CIs obtained from it are typically less accurate than those obtained from the method that uses the $\texttt{t}$-statistic \cite{carpenter2000bootstrap,efron1994introduction}.
To get the best of both, (i) as  searching for $\pi_c$ requires an inner optimization routine and accuracy of CIs are less important, therefore we use the percentile method, and (ii) as the safety test requires no inner optimization and accuracy of CIs are more important to ensure safety, we use the $\texttt{t}$-statistic method.

To obtain the CIs on $\rho(\pi, k+\delta)$ using the percentile method, let $\Psi$ denote the empirical cumulative distribution function (CDF) of the pseudo performance forecasts $\hat \rho(\pi, k+\delta)^*$.
Then a $(1-\alpha)100\%$ CI, $[\hat \rho^\text{lb}, \hat \rho^\text{ub}]$, can be estimated as $[\Psi^{-1}(\alpha/2), \Psi^{-1}(1 - \alpha/2)]$, where $\Psi^{-1}$ denotes the inverse CDF distribution.
That is, if $\rho^*$ is an array of $B$ pseudo samples of $\hat \rho(\pi, k+\delta)$, and $\rho^{**}$ contains its sorted ordered-statistics, then a $(1-\alpha)100\%$ CI for $\rho(\pi, k+\delta)$ is $[\rho^{**}[(\alpha/2)B], \rho^{**}[(1-\alpha/2)B]]$.
Gradients of the lower bound from the percentile method can be computed using the same straight-through gradient estimator discussed earlier.

\paragraph{Complexity analysis (space, time, and sample size): }
Memory requirement for SPIN is linear in the number of past episodes as it stores all the past data to analyze performance trend of policies.  
As both SPIN and Baseline \cite{thomas2015high,thomas2019preventing} incorporate an inner optimization loop, the computational cost to search for a candidate policy $\pi_c$ before performing a safety test is similar. 
Additional computational cost is incurred by our method as it requires computing $(\Phi^\top\Phi)^{-1}$ and $\hat V$ in Algorithm \ref{Alg:2} for time series analysis.
%
However, note that as $\Phi^\top\Phi \in \mathbb{R}^{d\times d}$, where $d$ is the dimension of basis function and $d << k$, the cost of inverting $\Phi^\top\Phi$ is negligible.
To avoid the computational cost of computing $\hat V$, the percentile method can be used during candidate policy search (as discussed earlier), and the $\texttt{t}$-statistic method can be used only during the safety test to avoid compromising on safety.
%
%
%
An empirical comparison of sample efficiency of SPIN and Baseline is presented in Figure \ref{fig:results}.

\section{Extended Empirical Details}
\label{apx:sec:experiments}

\subsection{Domains}
\label{apx:sec:domains}

%

\textbf{Non-stationary Recommender System (RecoSys): }
Online recommendation systems for tutorials, movies, advertisements and other products are ubiquitous \citep{theocharous2015personalized,theocharous2020reinforcement}.
Personalizing for each user is challenging in such settings as interests of an user for different items among the products that can be recommended fluctuate over time.
For an example, in the context of online shopping, interests of customers can vary based on seasonality or other unknown factors.
To abstract such settings, in this domain the reward (interest of the user) associated with each item changes over time.
%
%

For $\pi^\text{safe}$, we set the probability of choosing each item proportional to the reward associated with each item in MDP $M_1$.
This resembles how recommendations would have got set by an expert system initially, such that most relevant recommendation is prioritized while some exploration for other items is also ensured.

\textbf{Non-stationary Diabetes Treatment: } 
This NS-MDP is modeled using an open-source implementation \citep{simglucose} of the U.S. Food and Drug Administration (FDA) approved Type-1 Diabetes Mellitus simulator (T1DMS) \citep{man2014uva} for the treatment of Type-1 diabetes,
where we induced non-stationarity by oscillating the body parameters (e.g., rate of glucose absorption, insulin sensitivity, etc.) between two known configurations available in the simulator. 
Each step of an episode corresponds to a minute in an \textit{in-silico} patient's body and is governed by a continuous time non-linear ordinary differential equation (ODE) \citep{man2014uva}.

Notice that as the parameters that are being oscillated are inputs to a non-linear ODE system, the exact trend of performance for any policy in this NS-MDP is unknown.
This more closely reflects a real-world setting where \thref{ass:smooth} might not hold, as every policy's performance trend in real-world problems cannot be expected to follow \textit{any} specific trend \textit{exactly}--one can only hope to obtain a coarse approximation of the trend.

%

To control the insulin injection, which is required for regulating the blood glucose level, we use a parameterized policy based on the amount of insulin that a person with diabetes is instructed to inject prior to eating a meal \citep{bastani2014model}:
\begin{align}
    \text{injection} = \frac{\text{current blood glucose} - \text{target blood glucose}}{CF} + \frac{\text{meal size}}{CR},
\end{align}
where `current blood glucose' is the estimate of the person's current blood glucose level, `target blood glucose' is the desired blood glucose, `meal size' is the estimate of the size of the meal the patient is about to eat, and $CR$ and $CF$ are two real-valued parameters that must be tuned based on the body parameters to make the treatment effective.
We set $\pi^\text{safe}$ to a value near the optimal $CR$ and $CF$ values for MDP $M_1$.
This resembles how the values would have got set during a patient's initial visit to a medical practitioner.

\subsection{Baseline}
\label{apx:sec:baseline}

For a fair comparison, Baseline used for our experiments corresponds to the algorithm presented by \citet{thomas2015high}, which is also a type of Seldonian algorithm \cite{thomas2019preventing}.
While this algorithm is also designed to ensure safe policy improvement, it assumes that the MDP is stationary.
Specifically, during the safety test it ensures that a candidate policy's performance is higher than that of $\pi^\text{safe}$'s by computing CIs on the \textit{average} performance over the past episodes.

\subsection{Hyper-parameters}
\label{apx:sec:hyper}
In Table \ref{tab:hyper}, we provide hyper-parameter (HP) ranges that were used for SPIN and Baseline for both the domains.
As obtaining optimal HPs is often not feasible in practical scenarios, algorithms that ensure safety should be robust to how an end-user sets the HPs.
Therefore, we set the hyper-parameters within reasonable ranges and report the results in Figure \ref{fig:results}.
These results are aggregated over the \textit{entire} distribution of hyper-parameters, and \textit{not} just for the best hyper-parameter setting.
    This choice is motivated by the fact that best performances can often be misleading as it only shows what an algorithm \textit{can} achieve and not what it is \textit{likely} to achieve \cite{jordan2018using}.

    For both RecoSys and Diabetes, we ran $1000$ HPs per algorithm, per speed, per domain. 
    For RecoSys, we ran $10$ trials per HP and $1$ trial per HP for diabetes treatment as it involves solving a continuous time ODE and hence is relatively  computationally expensive.
    For experiments, the authors had shared access to a computing cluster, consisting of 50 compute nodes with 28 cores each.

\begin{table}[h]
    \centering
    \begin{tabular}{ccc}
        \hline    \textbf{Algorithm} & \textbf{Hyper-parameter} & \textbf{Range} \\
        \hline \\
        SPIN \& Baseline & $\alpha$ & $0.05$\\
        SPIN \& Baseline & $\delta$ & $\{2,4,6,8\}$\\
        SPIN \& Baseline & $N$ & $\delta \times$ \texttt{uniform}(\{$2,5$\})\\
        SPIN \& Baseline & $\eta$ & $10^{-1}$ \\
        SPIN \& Baseline & $\lambda$ (RecoSys) & $\texttt{loguniform}(5 \times 10^{-5}, 10^0$)\\
        SPIN \& Baseline & $\lambda$ (Diabetes) & $\texttt{loguniform}(10^{-2}, 10^0$)\\
        SPIN \& Baseline & $B$ (candidate policy search) & $200$\\
        SPIN \& Baseline & $B$ (safety test) & $500$\\
        SPIN & $d$  & $\texttt{uniform}(\{2,3,4,5\})$\\
    \end{tabular}
    \caption{Here, $N$ and $\eta$ represents the number of gradient steps, and the learning rate used while performing Line 14 of Algorithm \ref{Alg:1}. The dimension of Fourier basis is given by $d$. Notice that $d$ is set to different values to provide results for different settings where SPIN is \textit{incapable} of modeling the performance trend of policies exactly, and thus \thref{ass:smooth} is violated. This resembles practical settings, where it is not possible to exactly know the true underlying trend--it can only be coarsely approximated.}
    \label{tab:hyper}
\end{table}

\subsection{Plot Details}
\label{apx:sec:plots}

In Figure \ref{fig:results}, the plot on the bottom-left corresponds to how often unsafe policies were executed during the process whose learning curves were plotted in Figure \ref{fig:results} (top-left).
It can be seen that SPIN remains safe almost always.
The middle and the right plots in the top row of Figure \ref{fig:results} show the normalized performance improvement over the known safe policy $\pi^\text{safe}$.   
The middle and the right plots in the bottom row of Figure \ref{fig:results} show how often unsafe policies were executed.

Note that the performance for any policy $\pi$ is defined in terms of the expected return.
However, for the diabetes domain, we do not know the exact performances of any policy--we can only observe the returns obtained.
Therefore, even when an $\texttt{alg}$ selects $\pi^\text{safe}$, it is not possible to get an accurate estimate of its safety violation rate by directly averaging returns observed using a finite number of trials.
To make the evaluation process more accurate, we use the following evaluation procedure.

Let a policy $\pi$ be `unsafe' when $\rho(\pi, k+\delta) < \rho(\pi^\text{safe}, k+\delta)$, and let $\pi_c$ denote policies not equal to $\pi^\text{safe}$, then,
\begin{align}
    \Pr(\texttt{alg}(\mathcal D) = \text{unsafe}) &= \Pr( \pi_c = \text{unsafe}|\texttt{alg}(\mathcal D) = \pi_c)\Pr(\texttt{alg}(\mathcal D) = \pi_c) \\
    &\quad + \Pr( \pi^\text{safe} = \text{unsafe}|\texttt{alg}(\mathcal D) = \pi^\text{safe})\Pr(\texttt{alg}(\mathcal D) = \pi^\text{safe}) 
    \\
    &\overset{\textbf{(a)}}{=} \Pr( \pi_c = \text{unsafe}|\texttt{alg}(\mathcal D) = \pi_c)\Pr(\texttt{alg}(\mathcal D) = \pi_c),
\end{align}
where \textbf{(a)} holds because $\Pr( \pi^\text{safe} = \text{unsafe}) = 0$.
%
%
Therefore, to evaluate whether $\texttt{alg}(\mathcal D)$ is unsafe, for each episode we compare the sample average of returns obtained whenever $\texttt{alg}(\mathcal D) \neq \pi^\text{safe}$ to the sample average of returns observed using $\pi^\text{safe}$, multiplied by the probability of how often $\texttt{alg}(\mathcal D) \neq \pi^\text{safe}$ .

\end{document}